\documentclass{article}

\usepackage{microtype}
\usepackage{graphicx}
\usepackage{subcaption}
\usepackage{booktabs} 

\usepackage{hyperref}

\usepackage{yao}

\usepackage[preprint]{icml2026}

\usepackage[textsize=tiny]{todonotes}

\icmltitlerunning{Submission and Formatting Instructions for ICML 2026}

\begin{document}

\twocolumn[
\icmltitle{Nearly Optimal Active Preference Learning and Its Application to LLM Alignment}

\begin{icmlauthorlist}
\icmlauthor{Yao Zhao}{UArizona}
\icmlauthor{Kwang-Sung Jun}{UArizona}
\end{icmlauthorlist}

\icmlaffiliation{UArizona}{Department of Computer Science, University of Arizona, Tucson, USA}
\icmlcorrespondingauthor{Yao Zhao}{yaoz@arizona.edu}
\vskip 0.3in
]



\printAffiliationsAndNotice{}  

\begin{abstract}
Aligning large language models (LLMs) depends on high-quality datasets of human preference labels, which are costly to collect. Although active learning has been studied to improve sample efficiency relative to passive collection, many existing approaches adopt classical experimental design criteria such as G- or D-optimality. These objectives are not tailored to the structure of preference learning, leaving open the design of problem-specific algorithms. In this work, we identify a simple intuition specific to preference learning that calls into question the suitability of these existing design objectives. Motivated by this insight, we propose two active learning algorithms. The first provides the first instance-dependent label complexity guarantee for this setting, and the second is a simple, practical greedy method. We evaluate our algorithm on real-world preference datasets and observe improved sample efficiency compared to existing methods.
\end{abstract}

\section{Introduction}
\label{sec:intro}

Reinforcement Learning from Human Feedback (RLHF) is a central post-training approach for aligning large language models (LLMs) with human preferences, addressing limitations of standard supervised fine-tuning. A core component of RLHF is learning a reward model that captures human judgments about the quality of model-generated responses. This reward model then guides optimization of the LLM, often via reinforcement learning algorithms such as Proximal Policy Optimization (PPO) \citep{ouyang2022training}.

The standard paradigm for learning a reward model is pairwise preference learning. In this setup, human labelers compare two LLM-generated responses to the same prompt and indicate which one they prefer. The resulting data are used to fit a reward model, commonly the Bradley-Terry-Luce (BTL) model \citep{bradley1952rank}. This paradigm is motivated by the cognitive observation that relative comparisons are typically easier and more reliable for humans than assigning absolute scores to individual items.

However, collecting human preference labels is expensive and time-consuming. This challenge is amplified when high-quality labels are required, such as in safety-critical applications like medical assistance, where each label may require expert knowledge and careful judgment. These constraints make data collection difficult to scale and can also compromise label quality \citep{zhang2023huatuogpt}. Consequently, it is important to develop data-efficient methods that minimize the number of queries to human labelers while maintaining reward model quality.

To this end, we cast preference learning as an active learning problem, in which an algorithm adaptively selects informative pairs to query, since \textit{not all pairs of responses are equally important}. Recent work applies active learning to this setting using well studied criteria from G-optimal and D-optimal experimental design \citep{das2025active, mehta2023sample, scheid2024optimal, mukherjee2024optimal}. However, these approaches primarily provide worst-case guarantees and do not adapt to the instance-specific difficulty of the underlying preference learning problem. As a result, they may fail to exploit the structure present in a given problem instance, leading to suboptimal sampling strategies.

In this work, we argue for an instance-dependent approach. Our method is guided by a simple intuition: pairs of responses with near-ties in preference (i.e., with small reward differences) are the most ambiguous, therefore it is informative to query for them. Building on this idea, we develop two complementary algorithms. The first, based on a novel experimental design formulation, achieves a nearly optimal instance-dependent label complexity. The second is a practical, easily implemented greedy method based on a new uncertainty sampling heuristic.

We summarize our contributions as follows:
\begin{itemize}
\item We propose a novel experimental design objective for active preference learning that, to the best of our knowledge, is the first to yield an instance-dependent label complexity guarantee for the active preference learning problem. Also we show an information-theoretic lower bound for this problem.
\item We design a simple and practical greedy algorithm based on a new uncertainty sampling heuristic. The method is easy to implement and avoids solving complex optimization problems.
\item We validate our methods on real-world preference datasets. Our results show that the proposed algorithm substantially reduces the number of required queries while achieving high reward model accuracy relative to existing methods.
\end{itemize}

\textbf{Notations:} Throughout the paper, we use the following notations, $[n] = \{1, 2, \ldots, n\}$; $\triangle(\cZ)=\{\lambda\in\mathbb{R}^{\abr{\cZ}}\mid \lambda_{z\in\cZ}\ge0,\sum_{z\in\cZ}\lambda_z=1\}$ is the probability simplex over the set $\cZ$; $\vcn{z}_A=\sqrt{z^\top A z}$ is the norm of $z$ with respect to a positive definite matrix $A$; $H(\lambda,\theta)=\sum_{z\in\cZ}\lambda_z \dot{\mu}\rbr{z^\top \theta}z z^\top$, where $\dot{\mu}$ is the derivative of a function $\mu$.

\section{Related Work}

The RLHF pipeline was popularized by InstructGPT \citep{ouyang2022training}, although the underlying idea dates back to earlier work of \citet{christiano2017deep}. The now-standard optimization algorithm in this setting, Proximal Policy Optimization (PPO), was introduced in \citet{schulman2017proximal}. More recently, alternative approaches such as Direct Preference Optimization (DPO) have been proposed \cite{rafailov2023direct}. Despite differences in methodology, these alignment techniques all require high-quality preference datasets, motivating the development of sample-efficient data collection methods.

A recent line of work studies active learning for preference collection to improve sample efficiency \citep{das2025active, mehta2023sample, scheid2024optimal, zhangself, pmlr-v235-dwaracherla24a, muldrew2024active, mukherjee2024optimal,liu2024dual,schlaginhaufen2025efficient, kveton2025active}. \citet{mehta2023sample} was among the first to formalize active learning for RLHF, proposing an active contextual dueling bandit approach with a worst-case suboptimality gap guarantee. \citet{das2025active} subsequently relaxed the assumptions to better match the BTL model, and \citet{liu2024dual} further extended the framework to account for labelers with heterogeneous expertise. Other work adopts alternative problem formulations, including best-arm identification \citep{scheid2024optimal} and online, rather than offline, alignment \citep{zhangself}. Notably, most of these results are worst-case and do not capture instance-dependent difficulty. More recently, active learning has also been developed specifically for the DPO pipeline \citep{muldrew2024active, kveton2025active}.

Many of these methods are grounded in experimental design, with D-optimal design being particularly popular \citep{mukherjee2024optimal,liu2024dual,schlaginhaufen2025efficient}. The foundational work in this method dates back to \citet{kiefer1960equivalence}. Recent papers adapt D-optimal design to preference collection in several ways. \citet{mukherjee2024optimal} proposed D-optimal experimental design algorithms for various feedback models and evaluated them using ranking based loss functions rather than the more common classification accuracy; however, their procedures are non-adaptive and use fixed allocations. \citet{liu2024dual} proposed a dual active learning approach based on D-optimal design that also models labeler expertise. Other work combines randomized exploration with D-optimal design \citep{schlaginhaufen2025efficient}. D-optimal design has also been adapted to DPO, with worst-case theoretical guarantees on logit estimation error \citep{kveton2025active}.

Beyond alignment-oriented preference learning, active learning and, more broadly, interactive machine learning are increasingly applied to other aspects of LLM use, including in-context learning \citep{margatina2023active} and prompt engineering \citep{diao2024active, shi2024efficient}.

\section{Preliminaries}
\label{sec:preliminaries}

Consider a set of prompts $\cX$ with $\abr{\cX}=n$, where each prompt $x\in\cX$ is associated with two candidate responses, $a_1$ and $a_2$. The unlabeled dataset consists of triplets $(x, a_1, a_2)$. Such response pairs are typically generated either by sampling the same LLM under different settings or by using two different LLMs. Following bandit terminology, we refer to each triplet as an arm, and let $z=\phi(x, a_1, a_2)\in \cZ \subset \mathbb{R}^d$ be a feature mapping satisfying the antisymmetry; i.e., $\phi(x, a_1, a_2)=-\phi(x, a_2, a_1)$ for any $x, a_1, a_2$. As in \citet{das2025active}, we assume $\vcn{z} \le 1$ for all $z\in\cZ$. The preference label is $y \in \cbr{1,0}$, where $y=1$ indicates that the first response is preferred and $y=0$ indicates that the second is preferred. We write $a_1 \succeq a_2$ to denote that $a_1$ is preferred over $a_2$.

We consider a realizable setting. There exists a true parameter $\theta^* \in \mathbb{R}^d$ such that the preference model satisfies, for any $z\in \cZ$,
\begin{align}
p(y=1|x, a_1, a_2) = \psi\rbr{z^\top \theta^*}, \label{eq:pref_model}
\end{align}
where $\psi$ is a monotone link function mapping the score $z^\top \theta^*$ to a probability. This framework includes several standard choices, including the logistic link $\psi(x) = \frac{1}{1+e^{-x}}$, the linear link $\psi(x)=\frac{x+1}{2}$ for $x \in [-1, 1]$, and the Gaussian CDF link $\psi(x) = \Phi(x)$, where $\Phi$ is the cumulative distribution function of the standard Gaussian distribution. In the preference learning literature, the logistic model is often referred to as the BTL model, and it is the primary focus of this paper.

We formulate the pairwise preference learning task as an active classification problem for the sign of $z^\top\theta^*$. The goal is to learn a classifier that correctly predicts the preferred response for every pair by actively and adaptively selecting an arm at each round and querying human annotators for the corresponding preference label.

At every round $t$, based on all the data collected up to time $t$, the learner selects an arm and queries the human annotator for its preference label. The learner then receives the label that is generated by \cref{eq:pref_model}. This sequential protocol can be seen as a special case of a batched setting, which is more realistic in large-scale label collection where the learner can select a batch of arms and query for their preference labels at once \citep{scheid2024optimal}. The labeled preference dataset at time $t$ can thus be represented as $\cD_t = \cbr{(z_{s}, y_{s})\mid s\in[t]}$. A classifier $\hat{\theta}_t$ can be trained based on $\cD_t$ by solving the following maximum likelihood estimation (MLE) problem,
\begin{align}
  \hat{\theta}_t = \arg\max_{\theta \in \mathbb{R}^d} \sum_{s=1}^t y_s \log \psi(z_s^\top \theta) + (1 - y_s) \log(1 - \psi(z_s^\top \theta)). \label{eq:ols_theta}
\end{align}
An arm $z$ is said to be classified correctly by the classifier $\hat{\theta}_t$ if $\sign(z^\top \hat{\theta}_t) = \sign(z^\top \theta^*)$. The goal of the learner is to design a strategy that can classify all the arms correctly with high probability. We define the following notion of probably approximately correct (PAC) strategy.
\begin{definition}{($\delta$-PAC)}
  A strategy is said to be $\delta$-PAC if, for any problem instance $(\cZ, \theta^*)$ and any $\delta > 0$, it can be executed in a finite number of steps and with probability at least $1-\delta$, it outputs a classifier that can classify all the arms correctly.
\end{definition}
Our goal is to design a $\delta$-PAC strategy, while minimizing the number of queries made to human annotators.

\begin{remark}
   Previous works \citet{das2025active} and \citet{mehta2023sample} use the notion of a suboptimality gap to evaluate the performance of the learned reward model, which is defined as the maximum difference between the true reward of the best response and the true reward of the response selected by the policy, often the \emph{greedy} policy defined by $\hat{\theta}$. Formally, it can be expressed as
  \begin{align*}
    \operatorname{SubOpt}(\pi)=\max_{x\in\cX} \max_{a\in\cA} \rbr{r^*(x, a) - r^*(x, \pi(x))},
  \end{align*}
  where $\pi(x)$ is the policy defined by $\hat{\theta}$, $\cA$ is a set of all responses, and $r^*(x, a)$ is a reward function. This measure has two key limitations. First, it focuses exclusively on the worst-case context, indeed all of the previous works with this measure so far are only able to show a worst-case guarantee in the end, and fail to capture instance-dependent behavior, ignoring the structural nuances of the problem instance. In contrast, we will show later in this paper that our approach is able to provide an instance-dependent guarantee. 
  Second, it is insensitive to finer-grained preferences between responses, which is crucial for the alignment task. For example, consider a prompt $x$ with three different responses satisfying $a_1 \succeq a_2 \succeq a_3$, a reward model that correctly identifies $a_1$ as the best response (resulting in zero $\operatorname{SubOpt}$) may still fail to distinguish the preference between $a_2$ and $a_3$. Indeed, a reward model is not just to pick a winner, but to provide a nuanced understanding of the relative quality of all responses. Note that given any number of responses, we can always construct them into pairs of two.
\end{remark}

\subsection{Confidence Interval}
Our approach relies on two key ingredients: a confidence width of the MLE estimator in \cref{eq:ols_theta} and the relative position of the resulting confidence interval, as we clarify in the next section. For completeness, we state here a confidence interval used throughout the rest of the paper. However, our framework is general and does not depend sensitively on this particular choice.

\begin{theorem}
\label{thm:conf_logi}
(Theorem~1 of \citet{jun2021improved})
Let $\delta \le e^{-1}$. Let $\hat{\theta}_t$ be the solution of \cref{eq:ols_theta} where, for every $s \in [t]$, $y_s$ is conditionally independent from $\{z_i\}_{i=1}^t \setminus \{z_s\}$ given $z_s$ (i.e., the $z_s$'s are a fixed design).
Fix $z \in \mathbb{R}^d$ with $\|z\| \le 1$. Let $t_{\text{eff}}$ be the number of distinct vectors in $\{z_s\}_{s=1}^t$.
Define
\begin{align*}
\gamma(d)
&= 64 \Bigl(d \log(6) + \log\Bigl(\frac{2 + t_{\text{eff}}}{\delta}\Bigr)\Bigr).
\end{align*}
Define the event
\begin{align*}
\mathcal{E}_{\text{var}}
=\Bigl\{\forall z',\;
&\frac{1}{\sqrt{2.2}}
\|z'\|_{H'(\cbr{z_s,s\in[t]},\theta^*)^{-1}}
\\
&\le
\|z'\|_{H'(\cbr{z_s,s\in[t]},\theta_t)^{-1}}
\\
&\le
\sqrt{2.2}\,
\|z'\|_{H'(\cbr{z_s,s\in[t]},\theta^*)^{-1}}
\Bigr\}.
\end{align*}
If
\begin{align*}
\xi_t^2
:= \max_{s \in [t]}
\|z_s\|_{H'(\cbr{z_s,s\in[t]},\theta^*)^{-1}}^2
\le \frac{1}{\gamma(d)},
\end{align*}
then
\begin{align*}
\mathbb{P}\Bigl(
\bigl| z^\top (\hat{\theta}_t - \theta^*) \bigr|
&> D_{\delta,t}(z):=
2.4 \|z\|_{H'(\cbr{z_s,s\in[t]},\theta^*)^{-1}}
\\
&\qquad\qquad\cdot
\sqrt{ \log \frac{2(2 + t_{\text{eff}})}{\delta} },
\ \mathcal{E}_{\text{var}}
\Bigr)
\le \delta,
\end{align*}
where
\begin{align*}
H'(\cD,\theta)
= \sum_{z\in\cD} \dot{\mu}\rbr{z^\top \theta} z z^\top
\end{align*}
is the Fisher information matrix at $\theta$.
\end{theorem}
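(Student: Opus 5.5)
This result is quoted from Theorem~1 of \citet{jun2021improved}, so the proof reduces to reproducing their argument for a fixed-design logistic MLE. I would organize it as a local Taylor expansion around $\theta^*$, controlled by self-concordance of the logistic loss.

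The first step is the first-order condition. The MLE satisfies $\sum_s (\mu(z_s^\top\hat\theta_t)-y_s)z_s=0$. By the mean value theorem this gives
\[
G(\hat\theta_t-\theta^*)=S,\qquad S:=\sum_s \eta_s z_s,
\]
where $\eta_s=y_s-\mu(z_s^\top\theta^*)$ is independent, bounded, mean-zero noise. Here $G=\sum_s \alpha_s z_sz_s^\top$ and $\alpha_s=\int_0^1\dot\mu(z_s^\top(\theta^*+v(\hat\theta_t-\theta^*)))\,dv$. Hence
\[
z^\top(\hat\theta_t-\theta^*)=z^\top G^{-1}S.
\]

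The second step is to show that $G$ and $H:=H'(\{z_s\},\theta^*)$ are within constant factors of each other. I would use the generalized self-concordance bound $|\ddot\mu|\le\dot\mu$. It gives $\alpha_s\ge\dot\mu(z_s^\top\theta^*)\,e^{-|z_s^\top\Delta|}$ with $\Delta=\hat\theta_t-\theta^*$. It therefore suffices to show $\max_s|z_s^\top\Delta|\le c$ for a small constant $c$. The same bound yields the event $\mathcal{E}_{\text{var}}$ with constant $2.2$, since $e^{\pm c}$ sandwiches $\dot\mu$ between the two parameters.

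To control $\max_s|z_s^\top\Delta|$, I would use a covering argument. Take a $1/2$-net of the $d$-dimensional unit sphere in the $H$-geometry, which has size at most $6^d$; this is the source of the $d\log 6$ term in $\gamma(d)$. Apply Bernstein's inequality to $u^\top H^{-1/2}S$ for each net point $u$ and take a union bound, which gives $\|S\|_{H^{-1}}\lesssim\sqrt{d+\log(1/\delta)}$. The key step is a bootstrapping or continuity argument along $v\in[0,1]$: as long as $|z_s^\top\Delta|\le 1$ for all $s$, we have $G\succeq e^{-1}H$. Then
\[
|z_s^\top\Delta|\le e\,\|z_s\|_{H^{-1}}\|S\|_{H^{-1}}\le e\,\xi_t\sqrt{\gamma(d)}/8<1,
\]
which closes the loop under the hypothesis $\xi_t^2\le1/\gamma(d)$. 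Existence of the MLE inside this region would follow from a fixed-point or convexity argument on the loss restricted to the ball.

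The third step handles a fixed direction $z$. Write
\[
z^\top G^{-1}S=z^\top H^{-1}S+z^\top(G^{-1}-H^{-1})S.
\]
The leading term is a sum of independent bounded variables with variance $\|z\|_{H^{-1}}^2$. A Bernstein bound, with the range term absorbed using $\xi_t$ small, gives deviation $\lesssim\|z\|_{H^{-1}}\sqrt{\log(1/\delta)}$. For the correction term, I would write $G^{-1}-H^{-1}=G^{-1}(H-G)H^{-1}$ and bound $\|H^{-1/2}(H-G)H^{-1/2}\|\lesssim\max_s|z_s^\top\Delta|$. This makes the correction of lower order, at most $\|z\|_{H^{-1}}\cdot\xi_t\sqrt{\gamma}\cdot\|S\|_{H^{-1}}$, which can be absorbed into the constant $2.4$.

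The $\log(2+t_{\text{eff}})$ factor would come from union-bounding over the distinct design points for the per-arm control used in the bootstrap. I expect the main obstacle to be this careful constant tracking: keeping the second-order remainder from contributing an extra $\sqrt d$ factor, rather than just $\sqrt{\log(1/\delta)}$, requires exploiting the independence of $S$ along the fixed direction $z$ and not merely Cauchy--Schwarz. Since the statement is imported verbatim, I would ultimately defer to \citet{jun2021improved} for the exact constants.
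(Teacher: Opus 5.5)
The paper does not prove this statement; it quotes it from \citet{jun2021improved}. Your skeleton is the right one: the exact identity $G\Delta=S$, self-concordance to compare $G$ with $H:=H'(\{z_s\},\theta^*)$, a $6^d$-net for $\|S\|_{H^{-1}}$, and the split $z^\top\Delta=z^\top H^{-1}S+z^\top G^{-1}(H-G)H^{-1}S$. However, your third step does not deliver the stated width.

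Put $M:=\max_{s\in[t]}|z_s^\top\Delta|$ and $L:=\log(2(2+t_{\text{eff}})/\delta)$. Your remainder bound is $c\,\|z\|_{H^{-1}}\,M\,\|S\|_{H^{-1}}$. The only control you have on $M$ is the Cauchy--Schwarz bound $M\le e\,\xi_t\|S\|_{H^{-1}}$ from your second step. Combining the two gives a remainder of order $\|z\|_{H^{-1}}\,\xi_t\|S\|_{H^{-1}}^2\approx\|z\|_{H^{-1}}\,\xi_t\,\gamma(d)$. The hypothesis only guarantees $\xi_t\sqrt{\gamma(d)}\le 1$, so this is of order $\|z\|_{H^{-1}}\sqrt{d\log 6+\log((2+t_{\text{eff}})/\delta)}$. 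That is exactly the $\sqrt d$-type ellipsoidal width the theorem is meant to beat, and it cannot be absorbed into the constant $2.4$. So the absorption claim in your third step is false, and your closing paragraph essentially concedes this. The remedy you suggest, exploiting independence of $S$ along the fixed direction $z$, is also not where the gain comes from: that independence is already fully used in the leading term $z^\top H^{-1}S$.

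What is missing is a sharper bound on $M$, at scale $\xi_t\sqrt{L}$ rather than $\xi_t\|S\|_{H^{-1}}\approx\xi_t\sqrt{\gamma(d)}$. To get it, apply the same decomposition at each design point: $z_s^\top\Delta=z_s^\top H^{-1}S+z_s^\top G^{-1}(H-G)H^{-1}S$.

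\emph{Leading part.} It is a sum of independent terms with variance $\|z_s\|_{H^{-1}}^2\le\xi_t^2$ and summands bounded by $\xi_t^2$. Bernstein's inequality plus a union bound over the $t_{\text{eff}}$ \emph{distinct} design points gives $\max_s|z_s^\top H^{-1}S|\le c\,\xi_t\sqrt{L}$.

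\emph{Remainder.} At $z_s$ it is at most $c\,\xi_t\,M\,\|S\|_{H^{-1}}$. On the net event, the hypothesis $\xi_t^2\le 1/\gamma(d)$ makes this a fixed fraction of $M$ (say $M/2$, once constants are tracked).

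\emph{Self-bounding.} This yields $M\le 2c\,\xi_t\sqrt{L}$. Your second step is still needed, but only to certify $G\succeq H/e$ so that these manipulations are valid.

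Feeding this $M$ back into the remainder for the target $z$ gives at most $c'\,\|z\|_{H^{-1}}\,\xi_t\sqrt{\gamma(d)}\,\sqrt{L}\le c'\,\|z\|_{H^{-1}}\sqrt{L}$, which is now absorbable. This mechanism is also what accounts for $t_{\text{eff}}$ appearing inside the final width. It matches the count $2+t_{\text{eff}}$: one event for the net, one for the target direction, and one per distinct design point. In your write-up the per-arm union bound is named but never used, so your argument gives $t_{\text{eff}}$ no role at all.
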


\subsection{Intuition}
\label{sec:intuition}

The core intuition of our approach is to leverage, for each arm $z \in \mathcal{Z}$, the \textit{individual} location of the confidence interval for $z^\top\theta^*$. 
This contrasts with most existing works, which primarily aims to reduce \textit{overall} uncertainty in $\theta^*$.
For instance, D-optimal experimental design minimizes the volume of the confidence ellipsoid for $\theta^*$, while G-optimal design minimizes the maximum prediction variance over arms. Equivalently, these criteria depend only on the widths of the confidence intervals for $z^\top\theta^*$ and ignore their locations.
\looseness=-1

We illustrate our intuition in \cref{fig:rlhf_intuition} using four arms with distinct confidence intervals on $z^\top\theta^*$. The varying widths of these intervals reflect different levels of uncertainty at a given time $t$ during the learning process. Algorithms that are designed to minimize uncertainty of each individual arm, such as G-optimal experimental design \citep{das2025active}, would prioritize arms with larger uncertainty, i.e., wide confidence intervals, such as arms 1 and 4 in the figure.

Although this strategy may seem reasonable from a general uncertainty reduction perspective, and often be used in many heuristics, from a sign classification perspective, this strategy can be suboptimal. 
For example, for arm 1, the entire confidence interval is above zero, so we can predict its sign correctly with high probability despite the interval’s large width. 
In contrast, the confidence interval for arm 2 includes zero. 
This indicates that under the current estimate $\hat{\theta}$, its sign remains ambiguous, even though its associated uncertainty width is relatively small. 
Therefore, intuitively it is more beneficial to pull arm 2 to reduce its uncertainty until its confidence interval is fully separated from zero.

\begin{figure}[t]
\centering
\begin{tikzpicture}
\begin{axis}[
    axis x line=middle,
    axis y line=middle,
    axis line style={->},
    xmin=0.5, xmax=10,
    ymin=-0.5,  ymax=1,
    xtick=\empty, ytick=\empty,
    extra y ticks={0},            
    extra y tick labels={0},
    clip=false,
    enlarge x limits={upper=0.2}, 
    xlabel={},          
    ylabel={$z^\top \theta^*$},    
]

\pgfplotsset{
    myerr/.style={
        error bars/.cd,
        y dir=both,
        y explicit,
        error mark=*,                  
        error bar style={solid, thick}
    },
    mydot/.style={
        only marks,
        mark=*,
        mark options={fill=black, draw=black},
        black
    }
}


\addplot+[mydot, myerr, error bar style={green, thick}]
    coordinates {(2,0.65) +- (0,0.3)};
\node[green, anchor=west, align=left, font=\small] at (axis cs:2.2, 0.65) {\textbf{Arm 1} \\ 0.8};

\addplot+[mydot, myerr, error bar style={red, thick}]
    coordinates {(4,-0.1) +- (0,0.2)};
\node[red, anchor=west, align=left, font=\small] at (axis cs:4.2, -0.1) {\textbf{Arm 2} \\ 0.4};

\addplot+[mydot, myerr, error bar style={green, thick}]
    coordinates {(6,-0.25) +- (0,0.15)};
\node[green, anchor=west, align=left, font=\small] at (axis cs:6.2, -0.25) {\textbf{Arm 3} \\ 0.3};

\addplot+[mydot, myerr, error bar style={red, thick}]
    coordinates {(8,0.2) +- (0,0.3)};
\node[red, anchor=west, align=left, font=\small] at (axis cs:8.2, 0.2) {\textbf{Arm 4} \\ 0.8};

\end{axis}
\end{tikzpicture}
\caption{Intuition for our proposed method. An arm's sign is considered confidently determined (green) if its confidence interval does not overlap with zero, and uncertain (red) otherwise. The numbers on the plots are the width of each confidence interval.}
\label{fig:rlhf_intuition}
\end{figure}
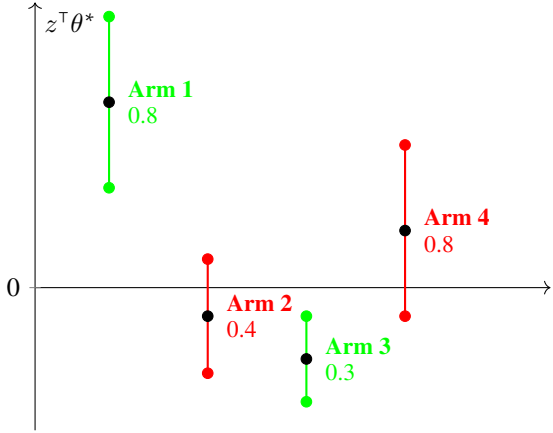

In the next two sections, we formalize the above intuition into two algorithm designs.

\section{Active Preference Learning via Experimental Design}
\label{sec:exp_design_rlhf}
In this section, we develop a $\delta$-PAC algorithm for the active preference learning problem based on the experimental design framework.

\subsection{Optimal Allocation}
To motivate our algorithm, we first analyze the optimal query allocation in an oracle setting where the true parameter $\theta^*$ is known. This allocation represents the ideal distribution of preference queries across pairs of arms. Subsequently, we will introduce an adaptive algorithm that approximates this optimal allocation without knowledge of $\theta^*$.

We begin by defining $\mathcal{C}_\theta$ as the set of all parameters $\theta$ that correctly classify all arms,
\begin{align*}
  \mathcal{C}_\theta: = \cbr{\theta\in\mathbb{R}^d\mid \forall z\in\mathcal{Z}, \sign\rbr{z^\top\theta}=\sign\rbr{z^\top\theta^*}}.
\end{align*}
Intuitively, the algorithm should be designed to return a classifier $\hat{\theta}$ that belongs to the set $\mathcal{C}_\theta$ with high probability and minimal label complexity. This requires an optimal allocation over the arms $\mathcal{Z}$ that efficiently exploits the structure of $\mathcal{C}_\theta$, ensuring the classifier $\hat{\theta}$ converges to this set as quickly as possible. We define
\begin{align*}
  \mathcal{Z}^+_{\theta^*}: = \cbr{z\in\mathcal{Z}\mid z^\top\theta^*>0}, 
\end{align*}
and
\begin{align*}\mathcal{Z}^-_{\theta^*}: = \cbr{z\in\mathcal{Z}\mid z^\top\theta^*<0}.
\end{align*}
By \cref{thm:conf_logi}, we have
\begin{align}
  z^\top\theta^*-z^\top\hat{\theta}_t\leq \cC\cdot\vcn{z}_{{H'(\cbr{z_s,s\in[t]},\theta^*)^{-1}}}\sqrt {\log\rbr{t/\delta}}, \label{ieq:conf_bound}
\end{align}
where $\cC$ is a positive constant introduced to simplify notation. For any arm $z\in\mathcal{Z}^+_{\theta^*}$, the requirement $z^\top\hat{\theta}_t>0$ is equivalent to:
\begin{align}
  z^\top\theta^*-z^\top\hat{\theta}_t<z^\top\theta^*, \label{ieq:equivalence}
\end{align}
Given the confidence bound in \cref{ieq:conf_bound}, a sufficient condition for \cref{ieq:equivalence} to hold is
\begin{align}
  \cC\cdot\vcn{z}_{{H'(\cbr{z_s,s\in[t]},\theta^*)^{-1}}}\sqrt {\log\rbr{t/\delta}}<z^\top\theta^*. \label{ieq:sufficient_condition}
\end{align}
To ensure $z^\top\hat{\theta}_t>0$ for all $z\in\mathcal{Z}^+_{\theta^*}$, this must hold for the worst case, leading to the sufficient condition: 
\begin{align*}
\frac{1}{\cC\cdot\sqrt{\log\rbr{t/\delta}}}
&\ge
\max_{z\in\cZ}\frac{\vcn{z}_{{H'(\cbr{z_s,s\in[t]},\theta^*)^{-1}}}}{z^\top\theta^*}
\\
&=
\max_{z\in\cZ}\frac{\vcn{z}_{{\rbr{t\sum_{z\in\cZ}\lambda_{z,t}\,\dot{\mu}\rbr{z^\top \theta^*}\,z z^\top}}^{-1}}}{z^\top\theta^*},
\end{align*}
where $\lambda_{z,t} = \sum_{s=1}^t\ii{z_s = z}/t$ is the empirical frequency of arm $z$ till time $t$. This motivates choosing a design that satisfies this sufficient condition as quickly as possible,
\begin{align}
  \label{eq:design_posi}
  \lambda^*_+=\arg\min_{\lambda\in\Delta(\cZ)}\max_{z\in\cZ}\fr{\vcn{z}_{{\rbr{\sum_{z\in\cZ}\lambda_z \dot{\mu}\rbr{z^\top \theta^*}z z^\top}}^{-1}}}{z^\top\theta^*}.
\end{align}
Similarly, for an arm $z\in\mathcal{Z}^-_{\theta^*}$, a parallel argument leads to the design:
\begin{align}
  \label{eq:design_nega}
  \lambda^*_-=\arg\min_{\lambda\in\Delta(\cZ)}\max_{z\in\cZ}\frac{\vcn{z}_{{\rbr{\sum_{z\in\cZ}\lambda_z \dot{\mu}\rbr{z^\top \theta^*}z z^\top}}^{-1}}}{-z^\top\theta^*}.
\end{align}
Combining the objectives from \cref{eq:design_posi} and \cref{eq:design_nega} yields a unified design that ensures the parameter estimate converges to the true parameter $\theta^*$ at a fast rate,
\begin{align*}  \lambda^*=\arg\min_{\lambda\in\Delta(\cZ)}\max_{z\in\cZ}\fr{\vcn{z}_{{\rbr{\sum_{z\in\cZ}\lambda_z \dot{\mu}\rbr{z^\top \theta^*}z z^\top}}^{-1}}}{\abr{z^\top\theta^*}}.
\end{align*}

\paragraph{Comparison to G-Optimal and XY-Optimal Designs.}
Readers familiar with experimental design methods, particularly in multi-armed bandits, may have noticed a similarity between our design objective and the G-optimal or XY-optimal design objective as frequently used in best arm identification problems \citep{soare2014best, NEURIPS2024_zhao, fiez2019sequential, yang2022minimax}. However, there are crucial differences. The G-optimal design aims to minimize the maximum prediction variance,
\begin{align*}
    \arg\min_{\lambda\in\Delta(\cZ)}\max_{z\in\cZ}\vcn{z}_{{\rbr{\sum_{z\in\cZ}\lambda_z z z^\top}}^{-1}}.
\end{align*}
The core distinction in our objective is the instance-dependent denominator $|z^\top\theta^*|$, which is absent in the G-optimal design objective. As explained in \cref{sec:intuition}, this term augments the G-optimal design objective by considering the confidence interval's location relative to the decision boundary. This augmentation is the key to achieving our instance-dependent results in the next section, moving beyond worst-case guarantees \cite{das2025active,mukherjee2024optimal,liu2024dual}.

Our objective also differs from the XY-optimal design \citep{fiez2019sequential, NEURIPS2024_zhao}, which is defined as,
\begin{align*}
    \arg\min_{\lambda\in\Delta(\cZ)}\max_{z_1,z_2\in\cZ}\frac{\vcn{z_1-z_2}_{{\rbr{\sum_{z\in\cZ}\lambda_z z z^\top}}^{-1}}}{|(z_1-z_2)^\top\theta^*|}.
\end{align*}
While the XY-optimal design also has an instance-dependent denominator, it is tailored specifically for best arm identification, as the allocation focuses on reducing uncertainty on the particular direction of $(z_1-z_2)$ for all $z_1, z_2 \in \cZ$. This is suitable for identifying a single winner among all arms. In contrast, our objective is designed to classify all arms, focusing on resolving the uncertainty of each individual arm $z$ with respect to the decision boundary. There are also other differences between our problem and the best arm identification problem, which we explain in the remark of next section.

\subsection{Algorithm Design with Unknown $\theta^*$}
Guided by the oracle allocation from the previous section, we now present an adaptive algorithm that approximates this allocation without access to the true parameter $\theta^*$. Our approach is inspired by the sequential experimental design framework \cite{fiez2019sequential, jun2021improved, NEURIPS2024_zhao}, which has been applied successfully to various multi-armed bandit problems with linear structure.

The key idea is that, to classify all arms with high confidence, it suffices to shrink the confidence interval for each $z^\top\theta^*$ until it no longer overlaps zero. Once an interval is separated from zero, the sign of the corresponding $z^\top\theta^*$ is determined with high confidence by \cref{ieq:equivalence}-\cref{ieq:sufficient_condition}. The number of samples required for a given arm $z\in\cZ$ depends on the magnitude of $z^\top\theta^*$. The oracle design from the previous section provides an allocation that achieves this separation efficiently by adapting to the structure of the problem instance.
\looseness=-1

The full algorithm is presented in \cref{alg:exp_design_logi}. It proceeds in two phases: a warm-up procedure to obtain an initial estimate of $\theta^*$, and a main sequential experimental design procedure. The warm-up phase, detailed in \cref{alg:warmup}, collects an initial dataset by sampling according to a G-optimal design.
\looseness=-1

The main phase of the algorithm operates in rounds. At each round $\ell$, we maintain a set of 'active' arms, $\mc{Z}_\ell$, which includes all arms whose signs have not yet been confidently determined by the previous estimate, $\hat{\theta}_{\ell-1}$. We start with $\mc{Z}_1=\mc{Z}$. The algorithm also uses an exponentially decreasing precision parameter, $\eps_\ell$, that controls the effective distance to the decision boundary for the active set. In each round, we compute a new design, $\lambda_\ell$, based on the estimate $\hat{\theta}_{\ell-1}$ and the active set $\mc{Z}_\ell$. Arms that are far from the decision boundary, i.e., where $\abr{z^\top\theta^*}$ is large are expected to be classified with high confidence in early rounds and are thus removed from the active set.

We note that the warm-up procedure requires knowledge of a lower bound on the minimum derivative of the link function at the true parameter.

\begin{algorithm}[t]
  \caption{Warm-up procedure}
  \label{alg:warmup}
  \begin{algorithmic}[1]
    \STATE \textbf{Input:} $\mc{Z}, \delta, \omega, \kappa_0=\min_{z\in\mc{Z}}\dot{\mu}\rbr{z^\top\theta^*}$
    \STATE \textbf{Initialize:}
    \[
      \lambda_0=\arg\min_{\lambda\in\Delta(\mc{Z})}\;
      \max_{z\in\mc{Z}}\|z\|^2_{(\sum_{z\in \mathcal{Z}}\lambda_z zz^{\top})^{-1}}
    \]
    \STATE \textbf{Set:}
    $N_0=\ur{3(1+\omega)\kappa_0^{-1}d\gamma(d)\log\rbr{2\abr{\mc{Z}}\rbr{2+\abr{\mc{Z}}}/\delta}}$
    \STATE Pull arms in $Z_{N_0}=\text{ROUND}(\lambda_0,N_0)$ and observe $Y_{N_0}$
    \STATE \textbf{return:} MLE $\hat{\theta}_0$ by \cref{eq:ols_theta}
  \end{algorithmic}
\end{algorithm}

\begin{algorithm}[t]
  \caption{Experimental design algorithm}
  \label{alg:exp_design_logi}
  \begin{algorithmic}[1]
  \STATE \textbf{Input} $\mc{Z}, \delta, \omega$
  \STATE \textbf{Initialize:} $\ell=1, \mc{Z}_1=\mc{Z}, \eps_\ell=2^{-\ell+1}, Q=\emptyset$
  \STATE $\hat{\theta}_0= $ \cref{alg:warmup} (warm-up procedure)
  \STATE \textbf{Set}\\
  \quad$f_1(\hat{\theta}_{\ell-1},\lambda):=\gamma(d)\max_{z\in\mc{Z}}\|z\|^2_{H(\lambda, \hat{\theta}_{\ell-1})^{-1}}$\\
  \quad$f_2(\mc{Z}_\ell,\hat{\theta}_{\ell-1},\lambda):=2.4^2\eps_\ell^{-2}\max_{z\in\mc{Z}_\ell}\|z\|^2_{H(\lambda, \hat{\theta}_{\ell-1})^{-1}}$\\
  \quad$f(\mc{Z}_\ell,\hat{\theta}_{\ell-1},\lambda):=f_1(\hat{\theta}_{\ell-1},\lambda)\vee f_2(\mc{Z}_\ell,\hat{\theta}_{\ell-1},\lambda)$

  \STATE \textbf{while} $\abr{\mc{Z}_\ell}\ge1$ \textbf{do}
  \STATE \hspace{1em} $\lambda_\ell=\arg\min_{\lambda\in\Delta(\mc{Z})}f(\mc{Z}_\ell,\hat{\theta}_{\ell-1},\lambda)$
  \STATE \hspace{1em} $\rho(\mc{Z}_\ell)=\min_{\lambda\in\Delta(\mc{Z})}f(\mc{Z}_\ell,\hat{\theta}_{\ell-1},\lambda)$
  \STATE \hspace{1em} $N_\ell=\ur{3(1+\omega)\rho(\mc{Z}_\ell)\log\rbr{\fr{2\ell^2\abr{\mc{Z}}\rbr{2+\abr{\mc{Z}}}}{\delta}}}\vee r(\omega)$
  \STATE \hspace{1em} $Z_{\ell}=\text{ROUND}(\lambda_\ell,N_\ell)$
  \STATE \hspace{1em} $Q\gets Q\cup Z_{\ell}$
  \STATE \hspace{1em} Pull arms in $Z_{\ell}$ and observe $Y_{\ell}$
  \STATE \hspace{1em} Compute MLE $\hat{\theta}_\ell$ with data $\cbr{Z_{\ell}, Y_\ell}$ by \cref{eq:ols_theta}
  \STATE \hspace{1em} $\mc{Z}_{\ell+1}=\mc{Z}_\ell\backslash\cbr{z\in\mc{Z}_\ell\middle| \abr{z^\top\hat{\theta}_\ell}>\eps_\ell}$
  \STATE \hspace{1em} $\ell\gets \ell+1$
  \STATE \textbf{end while}

  \STATE Pull arms in $Q$ and observe $Y$
  \STATE Compute MLE $\hat{\theta}$ with data $\cbr{Q, Y}$ by \cref{eq:ols_theta}
  \STATE \textbf{Output:} $\hat{\theta}$
  \end{algorithmic}
\end{algorithm}

\begin{remark}
The final, non-adaptive sampling phase in \cref{alg:exp_design_logi} (lines 19-20) is a critical step. It is necessitated by a technical requirement of the confidence bound in \cref{thm:conf_logi} and same for many other confidence bounds \citep{abbasi2011improved}, which requires that data is collected non-adaptively. That is, the selection of arms cannot depend on previous observations. Consequently, the confidence guarantee would not hold if we were to compute the final MLE using only the data gathered adaptively during the main rounds. This requirement distinguishes our setting from many standard sequential design problems \citep{fiez2019sequential, jun2021improved, NEURIPS2024_zhao}, where the goal is to identify a single best arm. In contrast, our goal is to return a reward model, which necessitates a valid confidence bound on the final parameter estimate itself.
\end{remark}

We show the following two instance dependent quantities that essentially capture the complexity of the problem instance,
\begin{align*}
  \rho^{\ast} =& \min_{\lambda \in \Delta(\mc{Z})}\max_{z\in \mathcal{Z}} \frac{\vcn{z}^2_{H(\lambda,\theta^*)^{-1}}}{\abr{z^\top\theta^*}^2},
\end{align*}
and
\begin{align*}
  \rho_0=\min_{\lambda\in\Delta(\mc{Z})}\max_{z\in\mc{Z}}3\gamma(d)\|z\|^2_{H(\lambda, \theta^*)^{-1}}.
\end{align*}
We define the following shorthand for simplicity before we are ready to present an upper bound for the label complexity of \cref{alg:exp_design_logi}, $\ell^*=\ur{\log\rbr{4\Delta^{-1}}}$, $\Delta=\min_{ z \in\mc{Z}}\abr{z^\top\theta^*}$, $\overline{\log}=\log\rbr{\fr{2\ell^{*2}\abr{\mc{Z}}\rbr{2+\abr{\mc{Z}}}}{\delta}}$.

\begin{theorem}
\label{thm:exp_design_complexity_logi} (Label complexity)
With a probability of at least $1-\delta$, \cref{alg:exp_design_logi} will stop after at most $c(1+\omega)\ell^*\overline{\log}\rho^*+c(1+\omega)\ell^*\overline{\log}\rho_{0}+c(1+\omega)\kappa_0^{-1}d\gamma(d)\overline{\log}+c\ell^*r(\omega)$ rounds, where $c$ is an absolute constant.
\end{theorem}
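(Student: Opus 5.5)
We follow the standard sequential experimental-design analysis of \citet{fiez2019sequential, jun2021improved}. The first step is to define a good event $\mathcal{E}$ of probability at least $1-\delta$. On $\mathcal{E}$, the warm-up estimate $\hat\theta_0$ and every round estimate $\hat\theta_\ell$ satisfy the conclusion of \cref{thm:conf_logi} simultaneously for all $z\in\cZ$ and all rounds $\ell$. To obtain this, we split the confidence budget as $\delta/(2\ell^2|\cZ|)$ per round and per arm and take a union bound. Applying \cref{thm:conf_logi} within a round is legitimate because each round's design depends only on the previous round's data, so it is a fixed design conditional on the past. The constraint $f_1$ in $N_\ell$ is what guarantees the burn-in condition $\xi_t^2\le 1/\gamma(d)$ of that theorem. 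Together with $\mathcal{E}_{\text{var}}$, this makes the Hessians evaluated at $\hat\theta_{\ell-1}$ and at $\theta^*$ agree up to the factor $2.2$. For the warm-up, the G-optimal design combined with $\dot\mu\ge\kappa_0$ gives $\max_z\|z\|^2_{H^{-1}}\le d/(\kappa_0 N_0)$, which meets the burn-in condition and costs $O((1+\omega)\kappa_0^{-1}d\gamma(d)\overline{\log})$.

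Next, we prove by induction on $\ell$ two invariants on $\mathcal{E}$. The first is that every $z\in\cZ_\ell$ satisfies $|z^\top\theta^*|\le 4\eps_\ell$, with the case $\ell=1$ trivial after rescaling. The second is that no arm is removed with the wrong sign, i.e., the estimated sign of a removed arm agrees with the sign of $z^\top\theta^*$. The $f_2$ term in the design gives, for $z\in\cZ_\ell$,
\[
D_{\delta,\ell}(z)\le 2.4\|z\|_{(N_\ell H(\lambda_\ell,\theta^*))^{-1}}\sqrt{\log}\le \eps_\ell/2
\]
up to the $2.2$ Hessian change. Hence $|z^\top\hat\theta_\ell - z^\top\theta^*|\le\eps_\ell/2$. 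If $|z^\top\hat\theta_\ell|>\eps_\ell$, the sign is correct. If the arm survives, then $|z^\top\theta^*|\le 3\eps_\ell/2\le 4\eps_{\ell+1}$. Consequently $\cZ_\ell=\emptyset$ once $\eps_\ell<\Delta/4$, so the number of rounds is at most $\ell^*$. The same argument shows that the final refit on $Q$ classifies every arm correctly: $Q$ contains the last rounds' designs, and information only increases. This step may need its own union-bound term.

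The remaining step is to bound $\rho(\cZ_\ell)$, which is the main obstacle. Since $f_1\vee f_2\le f_1+f_2$, and the minimum over $\lambda$ of a maximum is at most the value at a fixed $\lambda$, we bound the $f_1$ part by $\rho_0$ up to constants, using the Hessian equivalence between $\hat\theta_{\ell-1}$ and $\theta^*$. For $f_2$ we use $\lambda^*$ attaining $\rho^*$. Every $z\in\cZ_\ell$ has $|z^\top\theta^*|\le4\eps_\ell$, so
\[
\eps_\ell^{-2}\|z\|^2_{H(\lambda^*,\hat\theta_{\ell-1})^{-1}}\le 2.2\cdot 16\,\frac{\|z\|^2_{H(\lambda^*,\theta^*)^{-1}}}{|z^\top\theta^*|^2}\le c\rho^*.
\]
The delicate points are that the optimal designs are taken over all of $\cZ$ while the Hessian is evaluated at $\hat\theta_{\ell-1}$, and that $\mathcal{E}_{\text{var}}$ must hold for the design $\lambda^*$ itself rather than only for the design actually sampled. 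I would try first to justify this via the warm-up burn-in, which controls $\|\hat\theta_{\ell-1}-\theta^*\|$ uniformly in the relevant directions, together with self-concordance of the logistic link, $\dot\mu(a)\le e^{|a-b|}\dot\mu(b)$. Finally we sum $N_\ell\le 3(1+\omega)\rho(\cZ_\ell)\overline{\log}+r(\omega)+1$ over at most $\ell^*$ rounds, add the warm-up cost, and note that the final resampling of $Q$ at most doubles the main-phase count. This gives the claimed bound.
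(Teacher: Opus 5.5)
Your architecture matches the paper's proof. It has the elimination invariant, a per-round bound $\rho(\mathcal{Z}_\ell)\le C(\rho_0+\rho^*)$ via a Hessian equivalence between $\hat\theta_{\ell-1}$ and $\theta^*$, and a sum over $O(\ell^*)$ rounds plus the warm-up. The one step that does not go through as you justify it is the combination of the $f_1$ and $f_2$ bounds. The paper's display takes the same step silently, writing $\min_\lambda\max\{A(\lambda),B(\lambda)\}\le\min_\lambda A+\min_\lambda B$. A single fixed design does not in general control both terms. At $\lambda^*$, $f_1$ is only bounded by $3\gamma(d)\rho^*\max_z|z^\top\theta^*|^2$, not by $\rho_0$. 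At the minimizer $\lambda_{\rho_0}$ of $\rho_0$, $f_2$ is only bounded by $2.4^2\varepsilon_\ell^{-2}\rho_0/\gamma(d)$, which sums to a worst-case $\Delta^{-2}$-type cost. The missing idea is to evaluate at the mixture $\bar\lambda=\frac12(\lambda_{\rho_0}+\lambda^*)$. Since $H(\cdot,\theta)$ is linear in $\lambda$, $H(\bar\lambda,\theta)\succeq\frac12 H(\lambda_{\rho_0},\theta)$ and $H(\bar\lambda,\theta)\succeq\frac12 H(\lambda^*,\theta)$. Hence $\rho(\mathcal{Z}_\ell)\le\max\{f_1(\bar\lambda),f_2(\bar\lambda)\}\le 2\rho_0+c\rho^*$, which is all the summation needs.

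The remaining differences are minor.
Your per-round bound of the $f_2$ part by $c\rho^*$, using $\lambda^*$ and $|z^\top\theta^*|\le 4\varepsilon_\ell$, is a more direct form of the paper's argument through $\mathcal{S}_\ell$ and $\max_\ell\ge\frac{1}{\ell^*}\sum_\ell$; both give $\ell^*\rho^*$.

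The uniform-in-$\lambda$ equivalence you flag as delicate is what the paper imports as Lemma~7 of \citet{jun2021improved}: $\frac13 H(\lambda,\theta^*)\preceq H(\lambda,\hat\theta_\ell)\preceq 3H(\lambda,\theta^*)$ for all $\lambda\in\Delta(\mathcal{Z})$. Your self-concordance route reproves it. However, the uniform control of $z^\top(\hat\theta_{\ell-1}-\theta^*)$ over $z\in\mathcal{Z}$ must come from the $f_1$ term of round $\ell-1$, by induction on $\ell$, because each MLE is refit on its own round's data only. The warm-up covers only $\ell=1$.

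With the algorithm's constants the width is at most $\varepsilon_\ell$, not $\varepsilon_\ell/2$. This still gives $|z^\top\theta^*|\le 2\varepsilon_{\ell-1}=4\varepsilon_\ell$ on $\mathcal{Z}_\ell$ for $\ell\ge 2$, which is the correct form of the invariant. The paper's lemma states $2\varepsilon_\ell$, but its proof establishes this for $\mathcal{Z}_{\ell+1}$.

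No rescaling is needed for $\ell=1$: there $\varepsilon_1=1$ and $\gamma(d)\ge 2.4^2$, so $f_2\le f_1$ and $\rho(\mathcal{Z}_1)\le\rho_0$.

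Your factor $2$ for re-pulling $Q$ is correct; the paper's sum omits it, which only affects $c$.
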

We defer the proof to \cref{proof:label_cmpx} of the Appendix.

\begin{theorem}
\label{thm:correctness} (Correctness)
With a probability of at least $1-\delta$, the reward model $\hat{\theta}$ will classify all arms in $\mathcal{Z}$ correctly, i.e.,
\begin{align*}
  \sign(z^\top\hat{\theta})=\sign(z^\top\theta^*)\quad \text{ for } \forall z\in\mc{Z}.
\end{align*}
\end{theorem}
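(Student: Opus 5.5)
The plan is to show that the final non-adaptive data set $Q$ already certifies every sign, by applying \cref{thm:conf_logi} to it and intersecting a few high-probability events.

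\textbf{Events.} First define a good event $\mathcal{E}$ on which three things hold:
\begin{enumerate}
\item the warm-up estimate $\hat{\theta}_0$ satisfies the confidence bound of \cref{thm:conf_logi} for every $z\in\mc{Z}$;
\item for every round $\ell$, the round-$\ell$ estimate $\hat{\theta}_\ell$ satisfies $|z^\top(\hat{\theta}_\ell-\theta^*)|\le D_{\delta_\ell,N_\ell}(z)$ for all $z\in\mc{Z}$, together with the event $\mathcal{E}_{\text{var}}$;
\item the final estimate $\hat{\theta}$, computed from a fresh set of pulls of the multiset $Q$, satisfies the analogous bound.
\end{enumerate}
In each round $\ell$ the design $\lambda_\ell$ depends only on the earlier data, and $\hat{\theta}_\ell$ is computed only from the fresh batch $Z_\ell$. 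So conditionally on the past, each batch is a fixed design and \cref{thm:conf_logi} applies. The same holds for the final re-pull of $Q$, whose labels are fresh and independent of how $Q$ was chosen. Allocating $\delta_\ell=\delta/(2\ell^2|\mc{Z}|)$ per arm per round, plus a comparable share for the warm-up and final phases, and taking a union bound over $z$ and $\ell$ gives $\mathbb{P}(\mathcal{E})\ge 1-\delta$. This matches the $\log(2\ell^2|\mc{Z}|(2+|\mc{Z}|)/\delta)$ factor in $N_\ell$.

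\textbf{Verifying the hypothesis $\xi_t^2\le1/\gamma(d)$.} This is where $f_1$ enters.
\begin{itemize}
\item For the warm-up, $N_0$ is proportional to $\kappa_0^{-1}d\gamma(d)$, and G-optimality gives $\max_z\|z\|^2_{(\sum\lambda_0 zz^\top)^{-1}}\le d$. Together these yield $\max_z\|z\|^2_{H'^{-1}}\le 1/\gamma(d)$ after rounding, which costs a factor $(1+\omega)$.
\item For round $\ell$, $N_\ell\ge 3\rho(\mc{Z}_\ell)\ge 3\gamma(d)\max_z\|z\|^2_{H(\lambda_\ell,\hat{\theta}_{\ell-1})^{-1}}$. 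Here one needs to pass from $H(\cdot,\hat{\theta}_{\ell-1})$ to $H(\cdot,\theta^*)$ via $\mathcal{E}_{\text{var}}$, which costs the factor $2.2\le3$.
\end{itemize}

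\textbf{Induction on rounds.} On $\mathcal{E}$, the round-$\ell$ batch gives, for every $z\in\mc{Z}_\ell$,
\begin{align*}
|z^\top(\hat{\theta}_\ell-\theta^*)|\le 2.4\|z\|_{H'^{-1}}\sqrt{\log(\cdot)}\le\eps_\ell/2 .
\end{align*}
This follows because $N_\ell\ge 3\cdot2.4^2\eps_\ell^{-2}\max_{z\in\mc{Z}_\ell}\|z\|^2_{H^{-1}}\log(\cdot)$ (the $f_2$ term), after the same $2.2$ conversion; the constants may need adjusting. Then prove by induction that:
\begin{enumerate}
\item any $z$ removed in round $\ell$, i.e.\ with $|z^\top\hat{\theta}_\ell|>\eps_\ell$, has $\sign(z^\top\hat{\theta}_\ell)=\sign(z^\top\theta^*)$ and $|z^\top\theta^*|>\eps_\ell/2$;
\item any $z$ surviving to $\mc{Z}_{\ell+1}$ satisfies $|z^\top\theta^*|\le 3\eps_\ell/2$.
\end{enumerate}
Hence every $z$ is eventually removed, at the latest by round $\ell^*$, and the loop terminates.

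\textbf{Final step (the main obstacle).} One has to transfer the guarantees to $\hat{\theta}$, which is fit on all of $Q=\bigcup_\ell Z_\ell$ rather than on $Z_\ell$ alone. Take $z$ removed in round $\ell$. The Fisher information of $Q$ at $\theta^*$ dominates that of $Z_\ell$ in the Loewner order, so
\begin{align*}
\|z\|_{H'(Q,\theta^*)^{-1}}\le\|z\|_{H'(Z_\ell,\theta^*)^{-1}}.
\end{align*}
For the same reason $\xi^2$ for $Q$ is at most the round-$\ell$ bound, so the hypothesis of \cref{thm:conf_logi} still holds for $Q$. The effective-sample term $t_{\text{eff}}$ for $Q$ is at most $|\mc{Z}|$, which is already in the log. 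Therefore
\begin{align*}
|z^\top(\hat{\theta}-\theta^*)|\le\eps_\ell/2<|z^\top\theta^*|,
\end{align*}
and \cref{ieq:equivalence}--\cref{ieq:sufficient_condition} give $\sign(z^\top\hat{\theta})=\sign(z^\top\theta^*)$.

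I expect the delicate points to be:
\begin{itemize}
\item getting the strict inequality $\eps_\ell/2<|z^\top\theta^*|$ with the stated constants, possibly by tightening the $\eps_\ell/2$ bound to something like $\eps_\ell/3$;
\item justifying that $\mathcal{E}_{\text{var}}$ lets us compare $H(\lambda,\hat{\theta}_{\ell-1})$ with $H(\lambda,\theta^*)$ for the designed $\lambda$, which needs the previous round's estimate to be accurate to $O(1)$ in every $z$-direction. That accuracy is supplied by the $f_1$ term and by the warm-up.
\end{itemize}
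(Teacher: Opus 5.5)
Your skeleton is the paper's. You use the same per-round budget $\delta/(2\ell^2|\mathcal{Z}|)$, and the same chain to bound the round-$\ell$ width on $\mathcal{Z}_\ell$: confidence bound, rounding lemma, factor-$3$ change of plug-in parameter, and the $f_2$ term in $N_\ell$. You also use the same transfer to the final estimate via $H'(Q,\theta^*)\succeq H'(Z_\ell,\theta^*)$; you check more carefully than the paper that the $\xi^2$ hypothesis and $t_{\text{eff}}$ survive for $Q$, and that the re-pull of $Q$ is a fixed design.

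You diverge only at the last step, and there you are right where the paper is not. The paper proves $|z^\top(\hat\theta-\theta^*)|\le\varepsilon_\ell$ and $|z^\top\hat\theta_\ell|>\varepsilon_\ell$, then feeds both into the stopping lemma. That lemma needs the width and the margin for the same estimator, so it certifies only $\hat\theta_\ell$. That is what the paper's text concludes before switching to $\hat\theta$.

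\textbf{The gap.} Your factor-two margin is not delivered by the algorithm as stated, so ``adjusting constants'' means changing the algorithm. Tracking constants, the round-$\ell$ width is at most $2.4\sqrt{3(1+\omega)L_\ell/N_\ell}\,\|z\|_{H(\lambda_\ell,\hat\theta_{\ell-1})^{-1}}$, where $L_\ell=\log(2\ell^2|\mathcal{Z}|(2+|\mathcal{Z}|)/\delta)$. Since $N_\ell\ge3(1+\omega)\,2.4^2\varepsilon_\ell^{-2}\max_{z\in\mathcal{Z}_\ell}\|z\|^2_{H(\lambda_\ell,\hat\theta_{\ell-1})^{-1}}L_\ell$, this cancels to exactly $\varepsilon_\ell$, not $\varepsilon_\ell/2$.

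Your factor $2.2$ would only give about $0.86\,\varepsilon_\ell$, and it is not available anyway. $\mathcal{E}_{\text{var}}$ compares a batch's information at $\theta^*$ with that at its own MLE. You need $H(\lambda,\hat\theta_{\ell-1})$ versus $H(\lambda,\theta^*)$ for the next design. That is the paper's Lemma 7 of Jun et al.\ (event $\mathcal{R}_{\ell-1}$). Your own remark gives the right mechanism: $f_1$ forces $|z^\top(\hat\theta_{\ell-1}-\theta^*)|\le 1$ in every direction, hence a factor $e\le3$.

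With width $\varepsilon_\ell$, elimination certifies only $|z^\top\theta^*|>0$. On the good event, an arm with $0<z^\top\theta^*=\eta\ll\varepsilon_\ell$ can be removed with $z^\top\hat\theta_\ell\in(\varepsilon_\ell,\varepsilon_\ell+\eta]$. It can still end with $z^\top\hat\theta\in[\eta-\varepsilon_\ell,0)$. So neither your argument nor the paper's closes.

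The obstacle is not strictness. With width $\varepsilon_\ell/2$ for both estimators, $|z^\top\theta^*|>\varepsilon_\ell/2\ge|z^\top(\hat\theta-\theta^*)|$ is automatic, and aiming for $\varepsilon_\ell/3$ would require an even larger change. A correct version must modify the experimental-design algorithm in one of two ways:
\begin{itemize}
\item replace $2.4^2$ by $4\cdot 2.4^2$ in $f_2$, which quadruples $N_\ell$; or
\item keep $N_\ell$ and eliminate only when $|z^\top\hat\theta_\ell|>2\varepsilon_\ell$. This certifies $|z^\top\theta^*|>\varepsilon_\ell$ against a final width at most $\varepsilon_\ell$, while survivors still satisfy $|z^\top\theta^*|\le 3\varepsilon_\ell$.
\end{itemize}
Either change alters only the absolute constant in the label-complexity theorem.

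Separately, your failure probabilities sum to a constant multiple of $\delta$, not to $\delta$. They are about $\pi^2\delta/12$ for the rounds, plus the warm-up, the final phase, and the $\mathcal{R}_\ell$ events. So $\delta$ must be rescaled as well, as the paper's own accounting also requires.
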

We defer the proof to \cref{proof:correctness} of the Appendix.

Our result stands in stark contrast to existing work on active learning for RLHF. Prior works \cite{das2025active, kveton2025active, mehta2023sample, scheid2024optimal}, for instance, have analyzed various performance metrics, such as suboptimality gap, simple regret, and logit error, but have only provided worst-case guarantees. Their convergence rates, typically on the order of \(O(d/\sqrt{T})\), lack dependence on the problem instance and therefore fail to capture the intrinsic difficulty of a given problem instance.

In contrast, our results are instance-dependent. This is because our approach identifies an experimental design objective that is intrinsically tied to the problem structure. Previous works have relied on standard, off-the-shelf objectives like G-optimality and D-optimality. While these are powerful general purpose tools, they are not specifically tailored to the active learning problem in RLHF, which limits the tightness of their theoretical guarantees.

The instance-dependent quantity \(\rho^*\) is analogous to complexity terms that arise in many best-arm identification problems with linear structures \citep{jun2021improved, fiez2019sequential, NEURIPS2024_zhao}. This term captures the problem's complexity through the geometry of the arm set and the true parameter \(\theta^*\). For linear bandits, \citet{fiez2019sequential, NEURIPS2024_zhao} have shown that such a term characterizes the optimal sample complexity via a minimax lower bound. However, extending this analysis to the logistic model is not straightforward. As shown by \citet{jun2021improved}, establishing a similar minimax lower bound for logistic bandits is highly non-trivial. The primary difficulty is that the KL-divergence between two Bernoulli distributions is not a simple function of the squared difference of their means. To the best of our knowledge, a minimax lower bound for best-arm identification in logistic bandits has not yet been established. We therefore take a different approach and provide an information-theoretic lower bound for this problem.

\begin{theorem}
  \label{thm:lower_bound_logi}
  Define a cone
  \begin{align*}
  \mathcal C_{\theta^*,\cZ}
  &:=\cbr{\theta\in\mathbb{R}^d:\exists\,z\in \cZ,\;\sign(z^\top\theta^*)
  \neq\sign(\theta^\top z)}.
  \end{align*}
  Then for any \(\delta\)–PAC algorithm to the preference learning problem, the expected label complexity \(\mathbb E_{\theta^*}[\tau]\) must satisfy
  \begin{align*}
  \mathbb E_{\theta^*}[\tau]
  \ge\;
  \log\frac1{2.4\,\delta}\min_{\lambda\in\Delta_n}\;\max_{\theta\in C_{\theta^*,Z}}
    \fr{1}{\sum_{i=1}^n \lambda_i\,
    D_{\mathrm{KL}}\rbr{P_{\theta^*,i}\,\|\,P_{\theta,i}}},
  \end{align*}
  where $P_{\theta,i}$ is the distribution of the preference feedback when pulling arm $z_i$ under parameter $\theta$.
\end{theorem}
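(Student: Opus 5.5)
This is the standard change-of-measure lower bound in the style of Kaufmann, Cappé and Garivier, as used for linear best-arm identification by \citet{fiez2019sequential}. I will apply the transportation lemma to the alternative parameters in the cone $\mathcal C_{\theta^*,\cZ}$.

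Fix a $\delta$-PAC algorithm with stopping time $\tau$ and final output $\hat\theta$. Let $T_i(\tau)$ be the number of pulls of arm $z_i$ up to $\tau$. Take any $\theta\in\mathcal C_{\theta^*,\cZ}$ with $\mathbb E_{\theta^*}[\tau]<\infty$; otherwise the bound is trivial. The plan has four steps.

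\textbf{Step 1: define a distinguishing event.} Let $E=\{\hat\theta\in\mathcal C_\theta\}$, the event that the output classifies every arm consistently with $\theta^*$. By the $\delta$-PAC property, $\mathbb P_{\theta^*}(E)\ge 1-\delta$. Since $\theta$ lies in the cone, some $z\in\cZ$ has $\sign(z^\top\theta)\neq\sign(z^\top\theta^*)$. On $E$ the output agrees with $\theta^*$ on that arm, so it is wrong under $\theta$. Correctness under instance $\theta$ then forces $\mathbb P_\theta(E)\le\delta$. This event lies in $\mathcal F_\tau$.

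\textbf{Step 2: apply the transportation lemma.} The log-likelihood ratio of the observed labels, together with Wald's identity for the stopped sum, gives
\begin{align*}
\sum_{i=1}^n \mathbb E_{\theta^*}[T_i(\tau)]\,D_{\mathrm{KL}}\rbr{P_{\theta^*,i}\,\|\,P_{\theta,i}} \ge d_{\mathrm{kl}}\rbr{\mathbb P_{\theta^*}(E),\mathbb P_\theta(E)} \ge d_{\mathrm{kl}}(1-\delta,\delta).
\end{align*}
Here $d_{\mathrm{kl}}$ is the Bernoulli KL divergence. Its standard bound $d_{\mathrm{kl}}(1-\delta,\delta)\ge\log\frac{1}{2.4\delta}$ produces the constant in the statement. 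The only model input is that each label is Bernoulli$(\psi(z_i^\top\theta))$ given the chosen arm. The arm-selection rule is the same under both measures, so its terms cancel in the likelihood ratio.

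\textbf{Step 3: normalize into an allocation.} Set $\lambda_i=\mathbb E_{\theta^*}[T_i(\tau)]/\mathbb E_{\theta^*}[\tau]\in\Delta_n$. Then Step 2 reads
\begin{align*}
\mathbb E_{\theta^*}[\tau]\sum_i\lambda_i D_{\mathrm{KL}}\rbr{P_{\theta^*,i}\,\|\,P_{\theta,i}}\ge\log\frac1{2.4\delta}
\end{align*}
for every $\theta$ in the cone. This $\lambda$ does not depend on $\theta$. Dividing and taking the maximum over $\theta$ gives
\begin{align*}
\mathbb E_{\theta^*}[\tau]\ge\log\frac1{2.4\delta}\max_\theta \frac{1}{\sum_i\lambda_i D_{\mathrm{KL}}\rbr{P_{\theta^*,i}\,\|\,P_{\theta,i}}}.
\end{align*}
This is at least the minimum over $\lambda\in\Delta_n$ of the same quantity, which is the claimed bound.

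\textbf{Step 4: handle the technical edge cases.} The main obstacle is rigor rather than ideas. The cone is not closed, and the KL can be zero for some $\theta$; in that case the bound is infinite. That is still consistent, because then no finite-expectation $\delta$-PAC algorithm exists and the inequality holds vacuously. I also need finiteness of $\mathbb E[\tau]$ for Wald's identity, and the KL divergences must be finite. For the logistic link they are, since $\psi\in(0,1)$.
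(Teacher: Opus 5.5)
Your proposal is correct and follows essentially the same route as the paper: you apply the transportation lemma of \citet{kaufmann2016complexity} to every $\theta\in\mathcal C_{\theta^*,\cZ}$, then normalize the expected pull counts into an allocation and invert. You also spell out the distinguishing event and the bound $d_{\mathrm{kl}}(1-\delta,\delta)\ge\log\frac{1}{2.4\delta}$, which the paper only cites. The one difference is that you normalize $\mathbb E_{\theta^*}[T_i(\tau)]$ directly, whereas the paper first takes an optimal solution $t^*$ of an auxiliary minimization problem; your version is slightly cleaner and avoids asking whether that optimum is attained.
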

We defer the proof to \cref{proof:lower_bd} of the Appendix.

\section{Greedy Algorithm}
\label{sec:greedy_algorithm}
In this section, we propose a simple greedy algorithm for the active preference learning problem that is directly motivated by the intuition from \cref{sec:intuition}. In contrast to the method in \cref{sec:exp_design_rlhf}, this approach avoids solving a complex optimization problem and does not require any knowledge of the minimum derivative of the link function. The key idea is to use a greedy objective that incorporates the location of each confidence interval relative to the decision boundary. As argued in \cref{sec:intuition}, relying solely on confidence width can lead to inefficient sampling. Our objective therefore uses the interval's location to augment its width.

Specifically, for each arm $z$, we define its \textit{remaining uncertainty} as the minimum remaining amount its confidence interval must shrink to no longer overlap with the decision boundary (zero). This value captures the uncertainty that must be resolved to classify the arm confidently. Formally, the remaining uncertainty of arm $z$ at round $t$ is defined as:
\begin{align*}
  \mathsf{RU}_t(z)=\min\cbr{-\mathsf{LCB}_t(z), \mathsf{UCB}_t(z)},
\end{align*}
where $\mathsf{LCB}_t(z)$ and $\mathsf{UCB}_t(z)$ are the lower and upper confidence bounds of $z^\top \theta^*$ at round $t$ respectively, i.e., $\mathsf{LCB}_t(z)=z^\top\hat{\theta}_t-D_{\delta,t}(z)$ and $\mathsf{UCB}_t(z)=z^\top\hat{\theta}_t+D_{\delta,t}(z)$ with $D_{\delta,t}(z)$ being the confidence width defined in \cref{thm:conf_logi}. This concept is illustrated in \cref{fig:greedy_intuition}, which depicts four possible scenarios for the confidence interval locations.

\begin{figure}[t]
\centering
\begin{tikzpicture}
\begin{axis}[
    axis x line=middle,
    axis y line=middle,
    axis line style={->},
    xmin=0.5, xmax=14,
    ymin=-0.5,  ymax=1,
    xtick=\empty, ytick=\empty,
    extra y ticks={0},            
    extra y tick labels={0},
    clip=false,
    enlarge x limits={upper=0.2}, 
    xlabel={},          
    ylabel={$z^\top \theta^*$},    
]

\pgfplotsset{
    myerr/.style={
        error bars/.cd,
        y dir=both,
        y explicit,
        error mark=*,                  
        error bar style={solid, thick}
    },
    mydot/.style={
        only marks,
        mark=*,
        mark options={fill=black, draw=black},
        black
    }
}

\addplot+[mydot, myerr, error bar style={green, thick}]
    coordinates {(2,0.65) +- (0,0.3)};
\node[green, anchor=west, align=left, font=\small] at (axis cs:2.2, 0.65) {\textbf{Arm 1} \\ 0.8};

\draw[<->, dashed, thick, green] 
    (axis cs:2.2, 0) -- (axis cs:2.2, 0.35) 
    node[midway, right, font=\small, olive] {-0.35};

\addplot+[mydot, myerr, error bar style={red, thick}]
    coordinates {(5,-0.1) +- (0,0.2)};
\node[red, anchor=west, align=left, font=\small] at (axis cs:5.2, -0.1) {\textbf{Arm 2} \\ 0.4};

\draw[<->, dashed, thick, red] 
    (axis cs:5.2, 0) -- (axis cs:5.2, 0.1) 
    node[midway, right, font=\small, olive] {0.1};

\addplot+[mydot, myerr, error bar style={green, thick}]
    coordinates {(8,-0.25) +- (0,0.15)};
\node[green, anchor=west, align=left, font=\small] at (axis cs:8.2, -0.25) {\textbf{Arm 3} \\ 0.3};

\draw[<->, dashed, thick, green] 
    (axis cs:8.2, -0.1) -- (axis cs:8.2, 0) 
    node[midway, right, font=\small, olive] {-0.1};

\addplot+[mydot, myerr, error bar style={red, thick}]
    coordinates {(11,0.1) +- (0,0.3)};
\node[red, anchor=west, align=left, font=\small] at (axis cs:11.2, 0.1) {\textbf{Arm 4} \\ 0.8};

\draw[<->, dashed, thick, red] 
    (axis cs:11.2, 0) -- (axis cs:11.2, -0.2) 
    node[midway, right, font=\small, olive] {0.2};

\end{axis}
\end{tikzpicture}
\caption{Illustration of remaining uncertainty (in olive). The remaining uncertainty is positive only when a confidence interval contains the decision boundary (zero), indicating that the arm's sign is not yet confidently determined.}
\label{fig:greedy_intuition}
\end{figure}
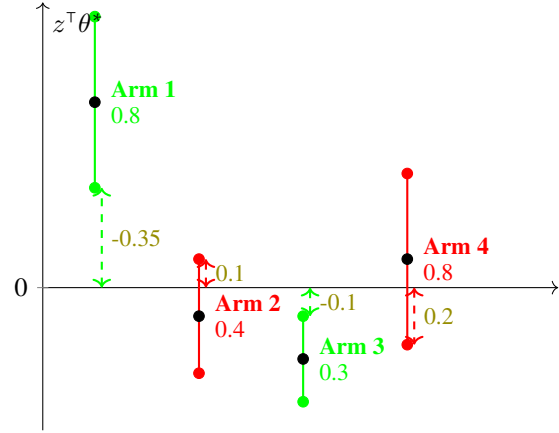

Based on this concept, we present our algorithm in \cref{alg:greedy_obj}. The algorithm follows a simple greedy strategy: in each round, it pulls the arm with the maximum remaining uncertainty.
\looseness=-1
\begin{algorithm}[t]
  \caption{Greedy Algorithm for Active Preference Learning}
  \begin{algorithmic}[1]
  \label{alg:greedy_obj}
  \STATE \textbf{Input} $\mc{Z}$, $\delta$, labeling budget $T$
  \STATE \textbf{Initialize} $\cD \leftarrow \emptyset$
  \STATE \textbf{for} $t = 1, \dots, T$ \textbf{do}
  \STATE \quad Compute $\hat{\theta}_t$ and $(\mathsf{LCB}_t(z), \mathsf{UCB}_t(z))$ for all $z \in \mc{Z}$
  \STATE \quad  $z_t \leftarrow \argmax_{z \in \mc{Z}} \min \{ -\mathsf{LCB}_t(z), \mathsf{UCB}_t(z) \}$ with ties broken arbitrarily.
  \STATE \quad Query $z_t$ to observe preference $y_t$.
  \STATE \quad $\cD \leftarrow \cD \cup \{(z_t, y_t)\}$.
  \STATE \textbf{end for}
  \STATE \textbf{return:} MLE $\hat{\theta}$ on $\cD$ using \cref{eq:ols_theta}.
  \end{algorithmic}
\end{algorithm}

The greedy objective in \cref{alg:greedy_obj} can be interpreted as a refined measure of uncertainty for this problem. It refines the standard confidence width by also incorporating the confidence interval's location relative to the decision boundary, as shown in the following lemma.
\begin{lemma}
  \label{lem:greedy_obj_equiv}
  The greedy objective in \cref{alg:greedy_obj} can be equivalently written as
\begin{align}
  \max_z\min\cbr{-\mathsf{LCB}_t(z), \mathsf{UCB}_t(z)} = \max_z\cbr{D_{\delta,t}(z) - \abr{\hat{\theta}_t^\top z}}. \label{eq:greedy_eq}
\end{align}
\end{lemma}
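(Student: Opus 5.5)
The plan is to prove the identity pointwise, for each fixed $z$, and then take the maximum over $z$ on both sides. Fix $z\in\cZ$ and write $m:=z^\top\hat{\theta}_t$ and $D:=D_{\delta,t}(z)$. By the definitions preceding \cref{alg:greedy_obj}, $\mathsf{LCB}_t(z)=m-D$ and $\mathsf{UCB}_t(z)=m+D$. Hence
\begin{align*}
\min\cbr{-\mathsf{LCB}_t(z),\,\mathsf{UCB}_t(z)}=\min\cbr{D-m,\;D+m}=D+\min\cbr{-m,\,m}.
\end{align*}

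The key step is the elementary identity $\min\cbr{-m,m}=-\abr{m}$. I would check it by cases. If $m\ge 0$, then $-m\le m$, so the minimum is $-m=-\abr{m}$. If $m<0$, then $m<-m$, so the minimum is $m=-\abr{m}$. Substituting gives
\begin{align*}
\min\cbr{-\mathsf{LCB}_t(z),\,\mathsf{UCB}_t(z)}=D_{\delta,t}(z)-\abr{\hat{\theta}_t^\top z}\quad\text{for every } z\in\cZ .
\end{align*}

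Since the two functions of $z$ agree at every point of $\cZ$, their maxima over $\cZ$ coincide, which is \cref{eq:greedy_eq}. The argmax sets are also identical, so \cref{alg:greedy_obj} selects the same arm under either form. No step is a genuine obstacle. The only point needing care is that $D_{\delta,t}(z)\ge 0$ is symmetric about $m$, so the interval is exactly $[m-D,\,m+D]$. This matches how $\mathsf{LCB}_t$ and $\mathsf{UCB}_t$ are defined from the width in \cref{thm:conf_logi}.

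As a remark, the identity makes the interpretation immediate. The objective is positive exactly when $D_{\delta,t}(z)>\abr{z^\top\hat{\theta}_t}$, that is, when the confidence interval contains zero. This is consistent with \cref{fig:greedy_intuition}.
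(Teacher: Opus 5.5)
Your proof is correct and follows the same route as the paper: substitute $\mathsf{LCB}_t(z)=z^\top\hat{\theta}_t-D_{\delta,t}(z)$ and $\mathsf{UCB}_t(z)=z^\top\hat{\theta}_t+D_{\delta,t}(z)$, pull $D_{\delta,t}(z)$ out of the minimum, and use $\min\{-m,m\}=-|m|$. The only difference is cosmetic: you prove the identity pointwise in $z$ and spell out the case check before taking the maximum, whereas the paper carries out the same manipulation inside $\max_z$ and leaves the case check implicit.
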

We defer the proof to \cref{proof:greedy_equiv} of the Appendix.

To demonstrate the advantage of our algorithm over a G-optimal based method, e.g., \citet{das2025active} pulling arm based on $\arg\max_{i \in [d]} \|e_i\|_{H_t'^{-1}}$, we show a simple canonical instance where their respective label complexities differ significantly. Consider the following instance. Let $\eps$ be a small enough constant, e.g., $\eps\in(0, 1/4)$. Using the stopping rule from \cref{thm:correctness_stopping_condi} in Appendix, we have the following result.
\looseness=-1
\begin{theorem}
  \label{thm:greedy_obj_better}
  For the canonical instance $z_i = e_i$ for $i=1, \ldots, d$ and $\theta^*=(1, \ldots, 1, \eps)^\top$ for $\eps\in(0, 1/4)$, the greedy selection objective $\arg\max_{i \in [d]} \|e_i\|_{H_t'^{-1}}$ has label complexity scaling as $\tilde{O}\rbr{\fr{d}{\eps^2}}$, while with \cref{alg:greedy_obj}, the label complexity scales as $\tilde{O}\rbr{d+\fr{1}{\eps^2}}$.
\end{theorem}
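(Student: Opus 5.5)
The plan is to exploit the fact that with orthonormal arms $z_i=e_i$ the Fisher matrix is diagonal. Write $n_i(t)$ for the number of pulls of arm $i$ up to time $t$. Then
\[
H'_t=\mathrm{diag}\bigl(n_i(t)\dot{\mu}(\theta_i)\bigr),
\]
so the problem decouples into $d$ independent one-dimensional logistic estimation problems. In particular
\[
\|e_i\|_{H_t'^{-1}}^2=\frac{1}{n_i(t)\,\dot{\mu}(\theta_i)}.
\]
Here $\theta_i\in\{1,\eps\}$, so $\dot\mu(\theta_i)\in[\dot\mu(1),1/4]$ is bounded above and below by absolute constants. Consequently $D_{\delta,t}(e_i)\asymp \sqrt{\log(t/\delta)/n_i(t)}$ up to constants. (The $\sqrt{2.2}$ factor from $\mathcal E_{\mathrm{var}}$ is handled the same way once the warm-up condition $\xi_t^2\le 1/\gamma(d)$ holds.) I will condition on the good event of \cref{thm:conf_logi}, union bounded over arms and rounds, so that each $\hat\theta_{t,i}$ lies within $D_{\delta,t}(e_i)$ of $\theta_i^*$. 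I will also use the stopping rule of \cref{thm:correctness_stopping_condi}: stop once every confidence interval excludes zero, i.e.\ once $D_{\delta,t}(e_i)<|\hat\theta_{t,i}|$ for all $i$.

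\textbf{G-optimal objective.} Maximizing $\|e_i\|_{H_t'^{-1}}$ amounts to pulling the arm with the smallest $n_i\dot\mu(\theta_i)$. The counts are therefore kept balanced, $n_i(t)\asymp t/d$ for every $i$. The last arm has gap $\eps$, so it needs $n_d\gtrsim \log(1/\delta)/\eps^2$ before its interval excludes zero, and no stopping can happen before that. Balancing forces every other arm to receive the same order of pulls. This gives the total $\tilde\Theta(d/\eps^2)$, which is both a lower bound and, via the stopping rule, a matching upper bound.

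\textbf{\cref{alg:greedy_obj}.} By \cref{lem:greedy_obj_equiv} the algorithm pulls $\arg\max_i\{D_{\delta,t}(e_i)-|\hat\theta_{t,i}|\}$. Arms whose objective is negative are already classified.
\begin{itemize}
\item For $i<d$: on the good event $|\hat\theta_{t,i}|\ge 1-D_{\delta,t}(e_i)$, so arm $i$ has a negative objective once $D_{\delta,t}(e_i)<1/2$, i.e.\ once $n_i\gtrsim\log(t/\delta)$.
\item More generally, on the good event arm $i$ can only be the argmax while its objective is positive. This requires $2D_{\delta,t}(e_i)\ge|\theta_i^*|$, i.e.\ $n_i\lesssim\log(t/\delta)/\theta_i^{*2}$.
\end{itemize}
The greedy rule never pulls an arm whose objective is negative while some other arm has a positive objective. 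Hence each arm is pulled at most $O(\log(T/\delta)/\theta_i^{*2})$ times, plus one extra pull, until all arms are resolved. Summing gives $\tilde O\bigl((d-1)+1/\eps^2\bigr)=\tilde O(d+1/\eps^2)$ labels before the stopping condition triggers.

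\textbf{Main obstacle.} The delicate step is that the confidence bound of \cref{thm:conf_logi} is stated for a fixed design, while the greedy selection is adaptive. In the orthonormal case the samples for coordinate $i$ are i.i.d.\ Bernoulli$(\mu(\theta_i^*))$ regardless of when they are collected. I would therefore first replace the fixed-design bound by a per-arm bound, uniform over sample sizes $n_i$ (a union bound over $n_i\le T$ of a one-dimensional logistic MLE bound), which incurs only logarithmic factors absorbed in $\tilde O$. A second, minor issue is the initialization: $\hat\theta_{t,i}$ is undefined for $n_i=0$. I would take infinite confidence width, which forces each arm to be pulled a few times initially, at $O(d)$ total cost consistent with the claimed bound.
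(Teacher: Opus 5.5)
Your proposal is correct and follows essentially the paper's route: the diagonal Fisher matrix reduces the stopping rule to a per-arm requirement of order $\log(\cdot)/(\theta_i^{*})^2$ pulls, the G-optimal rule's balancing drags every arm up to the count needed by the $\varepsilon$-arm, and the greedy rule spends pulls only on unresolved arms, so the per-arm requirements add. You are more careful than the paper on two points it does not justify, namely applying the fixed-design bound to adaptively collected data and assuming a resolved arm is never pulled again: your per-coordinate i.i.d.\ label-sequence argument closes the first, and your ``pulled only while its objective is positive'' count closes the second; just make sure the per-coordinate bound is the genuinely one-dimensional one (with $\gamma(1)$), because the precondition $\xi_t^2\le 1/\gamma(d)$ you mention would by itself force $\Omega(d)$ pulls per arm, i.e.\ $\Omega(d^2)$ labels, which is not $\tilde O(d+1/\varepsilon^2)$.
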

We defer the proof to \cref{proof:greedy} of the Appendix.

\section{Experiments}
We evaluate our proposed active preference learning algorithm by comparing it against several baselines on multiple datasets. We consider the following five algorithms for comparison with our algorithm in \cref{alg:greedy_obj}:
\begin{itemize}
  \item \textbf{Random Sampling:} Arms are selected uniformly at random at each round.

  \item \textbf{Uncertainty Sampling:} It selects arms closest to the decision boundary based on the current estimate, i.e., those with the smallest absolute estimated reward: $\arg\min_z \abr{z^\top\hat{\theta}_t}$. A simple and widely adopted heuristic in active learning \citep{lewis1995sequential, tong2001support, mussmann2018uncertainty, raj2022convergence}. It has proven effective in various settings.
  
  \item \textbf{Selective Sampling:} A popular active learning method for streaming setting where the learner decides whether to query a human labeler as each arm arrives. We adapt it to our setting by iterating through the arms in a shuffled order and selecting the first arm $z$ whose lower confidence bound is below a threshold, e.g., $\mathsf{LCB}_t(z) < 0.1$. Note that all true rewards $z^\top\theta^*$ are positive in our setup, since we construct each $z$ as the difference between the chosen and rejected responses.
  
  \item \textbf{APO \citep{das2025active}:} It pulls the arm that is most uncertain under the current model, measured by the width of the confidence ellipsoid in its direction: $\arg\max_z \vcn{z}_{H'(\cbr{z_s,s\in[t]},\hat{\theta}_t)^{-1}}$. The same approach is also used in \citet{li2025provably}.
  
  \item \textbf{D-optimal experimental design:} This common experimental design criterion, used in \citet{liu2024dual,kveton2025active}, selects the arm that greedily maximizes the determinant of the information matrix, i.e., $\arg\max_z \det\left(\sum_{s=1}^t \dot{\mu}(z_s^\top \hat{\theta}_t) z_s z_s^\top\right)$. Note that \citet{kveton2025active} propose an equivalent formulation that avoids computing the determinant directly.
\end{itemize}
We evaluate all methods on three widely used RLHF preference datasets: the Anthropic helpful and harmless dataset \citep{bai2022training}, the Nectar dataset \citep{zhu2024starling}, and the ultrafeedback-binarized-preferences-cleaned dataset from \citet{notus2023}. For all datasets, we use Gemma2 \cite{gemma_2024} to generate text embeddings for the content.
\looseness=-1




Additional experiment details are discussed in \cref{app-experiment} of the Appendix due to space limitations, as well as the results of the ultrafeedback-binarized-preferences-cleaned dataset and the Nectar dataset. The result of the Anthropic helpful and harmless dataset is presented in \cref{fig:exp_result}. We report the classification accuracy of the estimated reward model on the test sets. Our proposed method consistently outperforms all baselines on the Anthropic helpful and harmless dataset. One interesting observation from our experiments is that methods like D-optimal design perform well early on, but then struggle to keep improving near the end. This is expected because D-optimal design is non-adaptive here and does not use any of the label information it collects. In contrast, uncertainty sampling is adaptive, so it eventually reaches performance comparable to ours, but it tends to lag at the beginning. Our method achieves the best of both behaviors. In fact, this is reflected in \cref{eq:greedy_eq}: one term matches the signal used by uncertainty sampling, while the other mirrors the criterion behind D-optimal design.


\begin{figure}[h]
    \begin{center}
    \includegraphics[width=.45\textwidth]{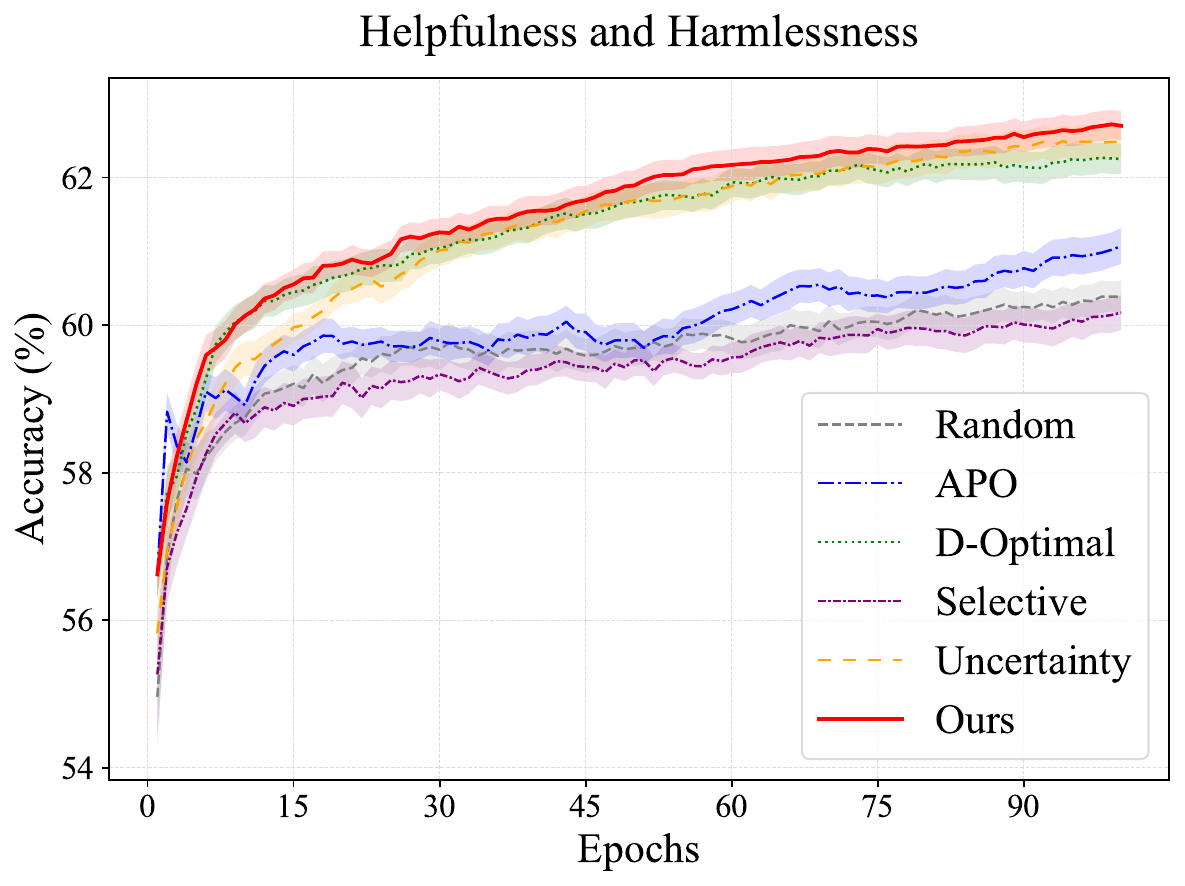}
    \end{center}
    \caption{Comparison of different active preference learning methods on the Anthropic helpful and harmless dataset.}
    \label{fig:exp_result}
\end{figure}

    

\section{Conclusion}
In this work, we addressed the problem of active preference learning for RLHF by proposing a novel, tailored experimental design objective. Based on this objective, we developed an efficient algorithm that achieves an instance-dependent label complexity bound, improving upon existing worst-case results. We also introduced a simple and practical greedy algorithm as an alternative. Consistent with many prior works, we use classification accuracy as a proxy for the performance of the reward model. However, another question remains open: how does the reward model's accuracy impact the performance of the aligned LLM? Investigating this connection, as well as developing a more precise experimental design objective that directly targets the overall RLHF goal, are compelling directions for future work.


\section*{Acknowledgments}
Yao Zhao, Kwang-Sung Jun were supported in part by the National Science Foundation under grant CCF-2327013 and Meta Platforms, Inc.

\section*{Impact Statement}

This paper presents work whose goal is to advance the field of Machine
Learning. There are many potential societal consequences of our work, none of
which we feel must be specifically highlighted here.

\bibliography{ref}
\bibliographystyle{icml2026}

\newpage

\appendix
\onecolumn

\part*{Appendix}

\section{Proofs}
\subsection{Proof of Theorem~\ref{thm:exp_design_complexity_logi}}
\label{proof:label_cmpx}
\newtheorem*{app-thm:label_cmpx}{Theorem~\ref{thm:exp_design_complexity_logi}}

\begin{app-thm:label_cmpx}(Label complexity)
With a probability of at least $1-\delta$, \cref{alg:exp_design_logi} will stop after at most $c(1+\omega)\ell^*\overline{\log}\rho^*+c(1+\omega)\ell^*\overline{\log}\rho_{0}+c(1+\omega)\kappa_0^{-1}d\gamma(d)\overline{\log}+c\ell^*r(\omega)$ rounds, where $c$ is an absolute constant.
\end{app-thm:label_cmpx}

\begin{proof}
  We prove an upper bound for the label complexity in the following.

  Define $\mathcal{S}_\ell:=\cbr{z\in\mathcal{Z}\middle| \abr{z^\top\theta^*}\le2\eps_\ell}$. Thus, by \cref{lem:exp_design_logi_ell}, with a probability at least $1-\delta$, we have $\bigcap_{\ell}\cbr{\mc{Z}_\ell \subseteq S_\ell }$ hold. This implies the following is true with a probability at least $1-\delta$ for all $\ell$
  \begin{align}
    \rho(\mc{Z}_\ell )&=\min_{\lambda\in\Delta(\mc{Z})}\max\cbr{\gamma(d)\max_{z\in\mc{Z}}\|z\|^2_{H(\lambda, \hat{\theta}_{\ell-1})^{-1}},2.4^2\eps_\ell^{-2}\max_{z\in\mc{Z}_\ell}\|z\|^2_{H(\lambda, \hat{\theta}_{\ell-1})^{-1}}}\nn\\
    &\le\min_{\lambda\in\Delta(\mc{Z})}\max\cbr{\gamma(d)\max_{z\in\mc{Z}}\|z\|^2_{H(\lambda, \hat{\theta}_{\ell-1})^{-1}},2.4^2\eps_\ell^{-2}\max_{z\in\mc{S}_\ell}\|z\|^2_{H(\lambda, \hat{\theta}_{\ell-1})^{-1}}}\tag{$\mc{Z}_\ell \subseteq S_\ell $}\\
    &\le\min_{\lambda\in\Delta(\mc{Z})}\max\cbr{3\gamma(d)\max_{z\in\mc{Z}}\|z\|^2_{H(\lambda, \theta^*)^{-1}},3\cdot2.4^2\eps_\ell^{-2}\max_{z\in\mc{S}_\ell}\|z\|^2_{H(\lambda, \theta^*)^{-1}}}\tag{\cref{lem:theta_ell_theta_star}}\\
    &\le\min_{\lambda\in\Delta(\mc{Z})}\max_{z\in\mc{Z}}3\gamma(d)\|z\|^2_{H(\lambda, \theta^*)^{-1}}+\min_{\lambda\in\Delta(\mc{Z})}\max_{z\in\mc{S}_\ell}3\cdot2.4^2\eps_\ell^{-2}\|z\|^2_{H(\lambda, \theta^*)^{-1}}\nn\\
    &=:\rho_0+\rho_{\ell}. \label{ieq:rho_ell}
  \end{align}

  By \cref{lem:exp_design_logi_ell}, with a probability at least $1-\delta$, we have $S_\ell =\emptyset$ for $\ell\ge\ell^*=\ur{\log\rbr{4\Delta^{-1}}}$. The total label complexity of \cref{alg:exp_design_logi} is the summation of the number of samples in each round, which can be upper bounded as
  \begin{align}
    &\sum_{\ell=1}^{\ell^*}N_\ell \nn\\
    =&\sum_{\ell=1}^{\ell^*}\ur{3(1+\omega)\rho(\mc{Z}_\ell)\log\rbr{\fr{2\ell^2\abr{\mc{Z}}\rbr{2+\abr{\mc{Z}}}}{\delta}}}\vee r(\omega)\nn\\
    \le&\sum_{\ell=1}^{\ell^*}4(1+\omega)\rho(\mc{Z}_\ell)\overline{\log}\vee r(\omega)\nn\\
    \le&\sum_{\ell=1}^{\ell^*}4(1+\omega)(\rho_0+\rho_{\ell})\overline{\log}\vee r(\omega) \tag{due to \eqref{ieq:rho_ell}}\\
    \le&\sum_{\ell=1}^{\ell^*}4(1+\omega)\overline{\log}\rho_{\ell}+4(1+\omega)\ell^*\overline{\log}\rho_{0}+\ell^*r(\omega).  \label{ieq:sum_Nk_logi}
  \end{align}
  On the other hand, we have
    \begin{align}
      \rho^{\ast} =& \min_{\lambda \in \Delta(\mc{Z})}\max_{z\in \mathcal{Z}} \frac{\vcn{z}^2_{H(\lambda,\theta^*)^{-1}}}{\abr{z^\top\theta^*}^2} \nn\\
      =&\min_{\lambda\in\Delta(\mc{Z})}\max_{\ell}\max_{ z \in S_\ell }\frac{\vcn{z}^2_{H(\lambda,\theta^*)^{-1}}}{\abr{z^\top\theta^*}^2} \nn\\
      \ge&\fr{1}{\ell^*}\min_{\lambda\in\Delta(\mc{Z})}\sum_{\ell=1}^{\ell^*}\max_{ z \in S_\ell }\frac{\vcn{z}^2_{H(\lambda,\theta^*)^{-1}}}{\abr{z^\top\theta^*}^2} \nn\\
      \ge&\fr{1}{16\ell^*}\sum_{\ell=1}^{\ell^*}2^{2\ell }\min_{\lambda\in\Delta(\mc{Z})}\max_{ z \in S_\ell }\vcn{z}^2_{H(\lambda,\theta^*)^{-1}}.\label{ieq:rho_star_logi}
    \end{align}
  Combining \eqref{ieq:sum_Nk_logi}, \eqref{ieq:rho_star_logi}, and $N_0$ for the warm-up, we have
  \begin{align*}
    \sum_{\ell=0}^{\ell^*}N_\ell\le c(1+\omega)\ell^*\overline{\log}\rho^*+c(1+\omega)\ell^*\overline{\log}\rho_{0}+c(1+\omega)\kappa_0^{-1}d\gamma(d)\overline{\log}+c\ell^*r(\omega),
  \end{align*}
  where $c$ is an absolute constant.
\end{proof}

\subsection{Proof of Theorem~\ref{thm:correctness}}
\label{proof:correctness}
\newtheorem*{app-thm:correctness}{Theorem~\ref{thm:correctness}}

\begin{app-thm:correctness}(Correctness)
With a probability of at least $1-\delta$, the reward model $\hat{\theta}$ will classify all arms in $\mathcal{Z}$ correctly, i.e.,
\begin{align*}
  \sign(z^\top\hat{\theta})=\sign(z^\top\theta^*)\quad \text{ for } \forall z\in\mc{Z}.
\end{align*}
\end{app-thm:correctness}

\begin{proof}
  The proof is similar to \cref{lem:exp_design_logi_ell} and \cref{lem:exp_design_correctness_logi}. Readers are recommended to read the proof of \cref{lem:exp_design_logi_ell} and \cref{lem:exp_design_correctness_logi} first. The difference is that here we need to show that the final reward model $\hat{\theta}$ will classify all arms in $\mathcal{Z}$ correctly, while in \cref{lem:exp_design_correctness_logi}, each reward model $\hat{\theta}_\ell$ is only shown to classify the arms in $\mathcal{Z}_\ell\backslash\mathcal{Z}_{\ell+1}$ correctly. Note that since $\mathcal{Z}_\ell$ will be eventually empty, for any $z\in\mathcal{Z}$, there exists $\ell$ such that $z\in \mathcal{Z}_\ell\backslash\mathcal{Z}_{\ell+1}$. So what we need to show is that for all $z\in \mathcal{Z}_\ell\backslash\mathcal{Z}_{\ell+1}$, the reward model $\hat{\theta}$ will classify $z$ correctly.
  
  With a probability of at least $1-\fr{\delta}{2\ell^2\abr{\mc{Z}}}$, we have for all $z\in\mathcal{Z}_\ell$,
  \begin{align}
    \abr{\rbr{\hat{\theta}-\theta^{*}}^\top z}\stkl{1}&2.4\vcn{z}_{H'\rbr{Q,\theta^*}^{-1}}\sqrt {\log\rbr{\fr{2\ell^2\abr{\mc{Z}}\rbr{2+\abr{\mc{Z}}}}{\delta}}}\nn\\
    \le&2.4\vcn{z}_{H'\rbr{\mc{Z}_\ell,\theta^*}^{-1}}\sqrt {\log\rbr{\fr{2\ell^2\abr{\mc{Z}}\rbr{2+\abr{\mc{Z}}}}{\delta}}}\tag{$H'\rbr{\mc{Z}_\ell,\theta^*}\preceq H'\rbr{Q,\theta^*}$}\\
    \stkl{2}&\fr{2.4\sqrt{1+\omega}\vcn{z}_{H\rbr{\lambda_\ell, \theta^*}^{-1}}}{\sqrt{N_\ell}}\sqrt {\log\rbr{\fr{2\ell^2\abr{\mc{Z}}\rbr{2+\abr{\mc{Z}}}}{\delta}}}\nn\\
    \stkl{3}&\fr{2.4\sqrt{3\rbr{1+\omega}}\vcn{z}_{H\rbr{\lambda_\ell, \hat{\theta}_{\ell-1}}^{-1}}}{\sqrt{N_\ell}}\sqrt {\log\rbr{\fr{2\ell^2\abr{\mc{Z}}\rbr{2+\abr{\mc{Z}}}}{\delta}}}\nn\\
    =&\fr{2.4\sqrt{3\rbr{1+\omega}}\vcn{z}_{H\rbr{\lambda_\ell, \hat{\theta}_{\ell-1}}^{-1}}}{2.4\sqrt{\ur{3(1+\omega)\rho(\mc{Z}_\ell)\log\rbr{\fr{2\ell^2\abr{\mc{Z}}\rbr{2+\abr{\mc{Z}}}}{\delta}}}\vee r(\omega)}} \sqrt {\log\rbr{\fr{2\ell^2\abr{\mc{Z}}\rbr{2+\abr{\mc{Z}}}}{\delta}}}\nn\\
    \le&\eps_\ell, \label{ieq:conf_width_eps_ell}
  \end{align}

where $b_1$ is due to \cref{thm:conf_logi}, $b_2$ is due to \cref{lem:rounding}, $b_3$ is due to \cref{lem:theta_ell_theta_star}. Consider an arm $z\in\mathcal{Z}_\ell\backslash\mathcal{Z}_{\ell+1}$, we have $\abr{z^\top\hat{\theta}_\ell}>\eps_\ell$ by the elimination condition in \cref{alg:exp_design_logi}. Thus by \cref{ieq:conf_width_eps_ell} and \cref{thm:correctness_stopping_condi} we know that $\hat{\theta}_\ell$ will classify $z$ correctly. By a union bound same as \cref{ieq:logi_good_event}, we know that with a probability at least $1-\delta$, all of arms in $\mathcal{Z}$ will be classified correctly by $\hat{\theta}$. 
\end{proof}

\begin{lemma}
  \label{lem:exp_design_logi_ell}
  With a probability of at least $1-\delta$, for all $z\in\mc{Z}_\ell$, we have $\abr{z^\top\theta^*}\le2\eps_\ell$, for all $\ell\ge1$.
  \end{lemma}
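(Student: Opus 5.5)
The plan is a good-event argument followed by a one-line triangle inequality. I read $\mc{Z}_\ell$ in the lemma as the active set produced by the round-$\ell$ elimination test, which compares $\abr{z^\top\hat{\theta}_\ell}$ with the threshold $\eps_\ell$; in the algorithm's indexing this is $\mc{Z}_{\ell+1}$. Under the literal indexing the argument below only gives $4\eps_\ell$, and an unconditional $\ell=1$ statement would need $\abr{z^\top\theta^*}\le 2$, which does not follow from $\vcn{z}\le1$ alone.

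\textbf{Step 1: good event.} For each round $\ell\ge1$ let $\mathcal{E}_\ell$ be the event that, for every $z\in\mc{Z}_\ell$,
\[
\abr{z^\top(\hat{\theta}_\ell-\theta^*)}\le 2.4\vcn{z}_{H'(Z_\ell,\theta^*)^{-1}}\sqrt{\log\rbr{\tfrac{2\ell^2\abr{\mc{Z}}(2+\abr{\mc{Z}})}{\delta}}}.
\]
\cref{thm:conf_logi} gives this for one $z$ with failure probability $\delta/(2\ell^2\abr{\mc{Z}})$. It applies because, given $\hat{\theta}_{\ell-1}$ and $\mc{Z}_\ell$, the multiset $Z_\ell=\text{ROUND}(\lambda_\ell,N_\ell)$ is a fixed design and only round-$\ell$ data enter $\hat{\theta}_\ell$.

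The precondition $\xi^2\le1/\gamma(d)$ holds because $N_\ell\ge3(1+\omega)\rho(\mc{Z}_\ell)\log(\cdot)$ and $\rho\ge f_1$. Together with \cref{lem:rounding} and \cref{lem:theta_ell_theta_star}, this controls $\max_{z}\vcn{z}^2_{H'(Z_\ell,\theta^*)^{-1}}$ by $3(1+\omega)\gamma(d)\max_z\vcn{z}^2_{H(\lambda_\ell,\hat{\theta}_{\ell-1})^{-1}}/N_\ell\le1/\gamma(d)$. The warm-up guarantees that \cref{lem:theta_ell_theta_star} is available from $\ell=1$.

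A union bound over $z\in\mc{Z}$ and $\ell\ge1$ gives failure probability at most $\sum_\ell\delta/(2\ell^2)\le\delta$. This is the same union bound later cited as \cref{ieq:logi_good_event}.

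\textbf{Step 2: width at most $\eps_\ell$.} On $\mathcal{E}_\ell$, repeat the chain $b_1$--$b_3$ of \cref{ieq:conf_width_eps_ell} with $Z_\ell$ in place of $Q$. Using $N_\ell\ge 3(1+\omega)\,2.4^2\eps_\ell^{-2}\max_{z\in\mc{Z}_\ell}\vcn{z}^2_{H(\lambda_\ell,\hat{\theta}_{\ell-1})^{-1}}\log(\cdot)$, which holds because $\rho\ge f_2$, we get $\abr{z^\top(\hat{\theta}_\ell-\theta^*)}\le\eps_\ell$ for all $z\in\mc{Z}_\ell$.

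\textbf{Step 3: conclude.} Any $z$ that survives the round-$\ell$ test satisfies $\abr{z^\top\hat{\theta}_\ell}\le\eps_\ell$. By Step 2 and the triangle inequality,
\[
\abr{z^\top\theta^*}\le\abr{z^\top\hat{\theta}_\ell}+\abr{z^\top(\hat{\theta}_\ell-\theta^*)}\le 2\eps_\ell .
\]
All rounds lie inside the single event from Step 1, so the bound holds simultaneously for all $\ell$ with probability at least $1-\delta$. Since surviving sets are nested, the bound persists in later rounds.

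\textbf{Main obstacle.} The delicate point is Step 1. The design $\lambda_\ell$ depends on earlier labels through $\hat{\theta}_{\ell-1}$ and $\mc{Z}_\ell$, so the fixed-design hypothesis of \cref{thm:conf_logi} must be applied conditionally on the past. One then checks that the per-round probabilities still union-bound correctly, which the $\ell^2$ factor inside the logarithm is there to absorb. The $\xi$-condition is handled through the $f_1$ term of $\rho$, as described in Step 1.
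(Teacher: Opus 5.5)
Your proposal is correct and follows the paper's proof. Both use the same $b_1$--$b_3$ chain (confidence bound, rounding, $\mathcal{R}_{\ell-1}$, then $\rho(\mathcal{Z}_\ell)\ge f_2$) to get $\abr{z^\top(\hat{\theta}_\ell-\theta^*)}\le\eps_\ell$ on $\mathcal{Z}_\ell$, and the same union bound over $\ell$ and $z$. The paper's last step is the contrapositive of your Step 3: any $z$ with $\abr{z^\top\theta^*}>2\eps_\ell$ has $\abr{z^\top\hat{\theta}_\ell}>\eps_\ell$ and is removed at the end of round $\ell$. That is exactly the $\mathcal{Z}_{\ell+1}$ statement, so your indexing remark is accurate.

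One small slip in your $\xi$-check: the bound should read $3(1+\omega)\max_z\vcn{z}^2_{H(\lambda_\ell,\hat{\theta}_{\ell-1})^{-1}}/N_\ell$, without the factor $\gamma(d)$. It is this quantity that $N_\ell\ge3(1+\omega)\rho(\mathcal{Z}_\ell)\log(\cdot)$ and $\rho\ge f_1$ bring below $1/\gamma(d)$.
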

  \begin{proof}
  With a probability of at least $1-\fr{\delta}{2\ell^2\abr{\mc{Z}}}$, we have for all $z\in\mathcal{Z}_\ell$,
  \begin{align}
    \abr{z^\top\rbr{\hat{\theta}_\ell-\theta^{*}}}\stkl{1}&2.4\vcn{z}_{H'\rbr{\mc{Z}_\ell,\theta^*}^{-1}}\sqrt {\log\rbr{\fr{2\ell^2\abr{\mc{Z}}\rbr{2+\abr{\mc{Z}}}}{\delta}}}\nn\\
    \stkl{2}&\fr{2.4\sqrt{1+\omega}\vcn{z}_{H\rbr{\lambda_\ell, \theta^*}^{-1}}}{\sqrt{N_\ell}}\sqrt {\log\rbr{\fr{2\ell^2\abr{\mc{Z}}\rbr{2+\abr{\mc{Z}}}}{\delta}}}\nn\\
    \stkl{3}&\fr{2.4\sqrt{3\rbr{1+\omega}}\vcn{z}_{H\rbr{\lambda_\ell, \hat{\theta}_{\ell-1}}^{-1}}}{\sqrt{N_\ell}}\sqrt {\log\rbr{\fr{2\ell^2\abr{\mc{Z}}\rbr{2+\abr{\mc{Z}}}}{\delta}}}\nn\\
    =&\fr{2.4\sqrt{3\rbr{1+\omega}}\vcn{z}_{H\rbr{\lambda_\ell, \hat{\theta}_{\ell-1}}^{-1}}}{2.4\sqrt{\ur{3(1+\omega)\rho(\mc{Z}_\ell)\log\rbr{\fr{2\ell^2\abr{\mc{Z}}\rbr{2+\abr{\mc{Z}}}}{\delta}}}\vee r(\omega)}} \sqrt {\log\rbr{\fr{2\ell^2\abr{\mc{Z}}\rbr{2+\abr{\mc{Z}}}}{\delta}}}\nn\\
    \le&\eps_\ell, \label{ieq:exp_design_logi_ell}
  \end{align}
where $b_1$ is due to \cref{thm:conf_logi}, $b_2$ is due to \cref{lem:rounding}, $b_3$ is due to \cref{lem:theta_ell_theta_star}.
  
Define a good event $\mc{E}_{\ell,z}$ for each $\ell$ and $z\in\mc{Z}_\ell$ as
    \begin{align*}
      \mc{E}_{\ell,z}=\cbr{\abr{z^\top\rbr{\hat{\theta}_\ell-\theta^{*}}}\le\eps_\ell }.
    \end{align*}
    We claim that with a probability at least $1-\delta$, the event $\bigcap_{\ell=1}^{\infty}\bigcap_{z\in\mc{Z}_\ell}\mc{E}_{\ell,z}$ holds. It can be proved by a union bound as follows,
    \begin{align}
      \pp{\rbr{\bigcap_{\ell=1}^{\infty}\bigcap_{z\in\mc{Z}_\ell}\mc{E}_{\ell,z}}^c}\le&\sum_{\ell=1}^{\infty}\sum_{z\in\mc{Z}_\ell}\pp{\mc{E}_{\ell,z}^c}\nn\\
      \le&\sum_{\ell=1}^{\infty}\sum_{z\in\mc{Z}_\ell}\fr{\delta}{2\ell^2\abr{\mc{Z}}}\nn\\
      \le&\fr{\delta}{2}\sum_{\ell=1}^{\infty}\fr{1}{\ell^2}\nn\\
      \le&\delta, \label{ieq:logi_good_event}
    \end{align}
    where the last step is by the fact that $\sum_{\ell=1}^{\infty}\fr{1}{\ell^2}=\fr{\pi^2}{6}<2$. Under the event $\bigcap_{\ell=1}^{\infty}\bigcap_{z\in\mc{Z}_\ell}\mc{E}_{\ell,z}$, we show the following.
  
  For all $z\in\mathcal{Z}_\ell$ satisfying $\abr{z^\top\theta^*}>2\eps_\ell$, we have, if $z^\top\theta^*>2\eps_\ell$,
  \begin{align}
    z^\top\hat{\theta}_\ell=&z^\top\rbr{\hat{\theta}_\ell-\theta^{*}}+z^\top\theta^*\nn\\
    >&-\eps_\ell+2\eps_\ell\nn\\
    >&\eps_\ell. \label{ieq:exp_design_logi_eps_1}
  \end{align}
  If $z^\top\theta^*<-2\eps_\ell$,
  \begin{align}
    z^\top\hat{\theta}_\ell=&z^\top\rbr{\hat{\theta}_\ell-\theta^{*}}+z^\top\theta^*\nn\\
    <&\eps_\ell-2\eps_\ell\nn\\
    <&-\eps_\ell. \label{ieq:exp_design_logi_eps_2}
  \end{align}
  Combine \cref{ieq:exp_design_logi_eps_1} and \cref{ieq:exp_design_logi_eps_2}, we have $\abr{z^\top\hat{\theta}_\ell}>\eps_\ell$, which means $z$ satisfying $\abr{z^\top\theta^*}>2\eps_\ell$ will be removed from $\mc{Z}_\ell$ in the end of round $\ell$ according to the elimination condition in \cref{alg:exp_design_logi}.
  \end{proof}

\begin{lemma}
  \label{lem:exp_design_correctness_logi}
  For all $\ell\ge1$, with a probability at least $1-\delta$, the arms in $\mathcal{Z}_\ell\backslash\mathcal{Z}_{\ell+1}$ are all classified correctly by $\hat{\theta}_\ell$.
\end{lemma}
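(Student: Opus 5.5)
Recall that the elimination rule of \cref{alg:exp_design_logi} keeps $z\in\mc{Z}_\ell\backslash\mc{Z}_{\ell+1}$ exactly when $z\in\mc{Z}_\ell$ and $\abr{z^\top\hat{\theta}_\ell}>\eps_\ell$. The plan is to work on the same good event $\bigcap_{\ell\ge1}\bigcap_{z\in\mc{Z}_\ell}\mc{E}_{\ell,z}$ used in the proof of \cref{lem:exp_design_logi_ell}, where
\[
\mc{E}_{\ell,z}=\cbr{\abr{z^\top(\hat{\theta}_\ell-\theta^*)}\le\eps_\ell}.
\]
By the union bound \cref{ieq:logi_good_event}, this event has probability at least $1-\delta$. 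The inequality chain \cref{ieq:exp_design_logi_ell} is what ensures each $\mc{E}_{\ell,z}$ fails with probability at most $\delta/(2\ell^2\abr{\mc{Z}})$. That chain uses three ingredients: \cref{thm:conf_logi} applied to the round-$\ell$ data (which is a fixed design given $\hat{\theta}_{\ell-1}$, since $\lambda_\ell$ depends only on past rounds), the rounding guarantee \cref{lem:rounding}, and the comparison \cref{lem:theta_ell_theta_star} between $H(\lambda,\hat{\theta}_{\ell-1})$ and $H(\lambda,\theta^*)$. Since \cref{lem:exp_design_logi_ell} has already established all of this, I will simply invoke it and not redo the chain.

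On this event, the core step is a sign argument. Take $z\in\mc{Z}_\ell\backslash\mc{Z}_{\ell+1}$, so that $\abr{z^\top\hat{\theta}_\ell}>\eps_\ell\ge\abr{z^\top(\hat{\theta}_\ell-\theta^*)}$. I split on the sign of $z^\top\hat{\theta}_\ell$. If $z^\top\hat{\theta}_\ell>\eps_\ell$, then
\[
z^\top\theta^* = z^\top\hat{\theta}_\ell - z^\top(\hat{\theta}_\ell-\theta^*) > \eps_\ell-\eps_\ell = 0 .
\]
If instead $z^\top\hat{\theta}_\ell<-\eps_\ell$, the same computation gives $z^\top\theta^*<0$. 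In both cases $\sign(z^\top\hat{\theta}_\ell)=\sign(z^\top\theta^*)$, which is exactly the sufficient condition \cref{ieq:sufficient_condition}: the confidence interval lies on one side of zero. Because the event is simultaneous over all $\ell$, the conclusion holds for every $\ell\ge1$ at once, which is stronger than the per-$\ell$ statement in the lemma.

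The only delicate point is the validity of the confidence bound at round $\ell$. The data used to fit $\hat{\theta}_\ell$ is chosen through $\lambda_\ell$, which depends on $\hat{\theta}_{\ell-1}$. Conditional on the past, however, the round-$\ell$ design is fixed and $\hat{\theta}_\ell$ is computed from fresh samples only. So \cref{thm:conf_logi} applies conditionally. Its precondition $\xi^2\le1/\gamma(d)$ is enforced by the $f_1$ term in $\rho(\mc{Z}_\ell)$, and the event $\mc{E}_{\text{var}}$ is supplied by \cref{lem:theta_ell_theta_star}. I expect this conditioning step to be the main obstacle in a fully rigorous write-up, since it has to be stated carefully. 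Within the paper's framework, though, it is already absorbed into \cref{lem:exp_design_logi_ell}, so the proof of the present lemma reduces to the two-case sign argument above.
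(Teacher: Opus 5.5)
Your proposal is correct and follows the same route as the paper. Both proofs work on the good event of \cref{lem:exp_design_logi_ell}, where $\abr{z^\top(\hat{\theta}_\ell-\theta^*)}\le\eps_\ell$ for all $z\in\mathcal{Z}_\ell$, combine it with the elimination condition $\abr{z^\top\hat{\theta}_\ell}>\eps_\ell$, and conclude by the two-case sign argument, which the paper cites as \cref{thm:correctness_stopping_condi} and you carry out inline. One small citation slip: the condition you actually use is that of \cref{thm:correctness_stopping_condi}, phrased in terms of $\hat{\theta}_\ell$, not \cref{ieq:sufficient_condition}, which is phrased in terms of $\theta^*$.
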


\begin{proof}
This result directly follows from \cref{ieq:exp_design_logi_ell}, \cref{thm:correctness_stopping_condi}, and the elimination condition in \cref{alg:exp_design_logi}. 

By \cref{ieq:exp_design_logi_ell}, we know that the confidence width of $z^\top\theta^*$ is at most $\eps_\ell$ for all $z\in\mathcal{Z}_\ell$. By the elimination condition in \cref{alg:exp_design_logi}, we have $\abr{z^\top\hat{\theta}_\ell}>\eps_\ell$ for all $z\in\mathcal{Z}_\ell\backslash\mathcal{Z}_{\ell+1}$, thus the sufficient condition in \cref{thm:correctness_stopping_condi} is satisfied, which means $\hat{\theta}_\ell$ will classify all arms in $\mathcal{Z}_\ell\backslash\mathcal{Z}_{\ell+1}$ correctly with a probability at least $1-\delta$.
\end{proof}

\begin{lemma}
  \label{lem:theta_ell_theta_star}
  (Lemma 7 of \citet{jun2021improved}) 
  For each $\ell\ge0$, define the event, 
  \begin{align*}
    \mc{R}_{\ell}=\cbr{\fr{1}{3}H(\lambda, \theta^*)\le H(\lambda, \hat{\theta}_{\ell})\le 3H(\lambda, \theta^*), \forall\lambda\in\Delta(\mc{Z})}.
  \end{align*}
  We have that 
  \begin{itemize}
    \item $\pp{\mc{R}_{0}}\ge1-2\delta$ and $\pp{\mc{R}_{\ell}\mid \mc{R}_{\ell-1},\cdots,\mc{R}_{0}}\ge1-2\delta$ for all $\ell\ge1$;
    \item $\max_{z\in\mc{Z}}\|z\|^2_{H'(\mc{Z}_\ell,\theta^*)^{-1}}\le1/\gamma(d)$.
  \end{itemize}
\end{lemma}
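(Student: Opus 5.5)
The plan is to prove the two bullets together by induction on $\ell$: the second bullet (the design is ``well conditioned'') is exactly the hypothesis $\xi_t^2\le 1/\gamma(d)$ of \cref{thm:conf_logi}, and the first bullet (the plug-in Fisher matrix is within a factor $3$ of the true one) then follows from that confidence bound and the self-concordance of the logistic link.

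\textbf{Step 1: the design is well conditioned.} This covers both the warm-up and round $\ell\ge1$. For the warm-up, I use the Kiefer--Wolfowitz theorem: $\lambda_0$ satisfies $\max_z \|z\|^2_{(\sum_z\lambda_{0,z}zz^\top)^{-1}}=d$. Next, $\dot\mu(z^\top\theta^*)\ge\kappa_0$ gives $H(\lambda_0,\theta^*)\succeq\kappa_0\sum_z\lambda_{0,z}zz^\top$. By the rounding guarantee (\cref{lem:rounding}), the rounded sample satisfies $H'(Z_{N_0},\theta^*)\succeq \frac{N_0}{1+\omega}H(\lambda_0,\theta^*)$. Combining these,
\[
\max_z\|z\|^2_{H'^{-1}}\le \frac{(1+\omega)d}{\kappa_0 N_0}\le \frac{1}{3\gamma(d)\log(\cdot)}\le\frac1{\gamma(d)}
\]
by the choice of $N_0$.

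For round $\ell\ge1$, I condition on $\mathcal R_{\ell-1}$, so $H(\lambda,\theta^*)\succeq\frac13 H(\lambda,\hat\theta_{\ell-1})$. By construction $N_\ell\ge 3(1+\omega)\rho(\mc Z_\ell)\log(\cdot)$ and $\rho(\mc Z_\ell)\ge f_1(\hat\theta_{\ell-1},\lambda_\ell)=\gamma(d)\max_z\|z\|^2_{H(\lambda_\ell,\hat\theta_{\ell-1})^{-1}}$. Together with rounding this gives
\[
\max_z\|z\|^2_{H'(Z_\ell,\theta^*)^{-1}}\le \frac{3(1+\omega)}{N_\ell}\max_z\|z\|^2_{H(\lambda_\ell,\hat\theta_{\ell-1})^{-1}}\le \frac{1}{\gamma(d)}.
\]
This is the second bullet.

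\textbf{Step 2: the factor-$3$ comparison.} Round $\ell$ uses only freshly drawn labels on arms chosen from past data. Conditionally on the past, the design is therefore fixed, so \cref{thm:conf_logi} applies. I apply it with confidence level $\delta/|\mc Z|$ for each $z\in\mc Z$ and take a union bound; together with $\mathcal E_{\mathrm{var}}$, this accounts for the $2\delta$. On this event, for all $z\in\mc Z$,
\[
|z^\top(\hat\theta_\ell-\theta^*)|\le 2.4\sqrt{\frac{\log(2(2+t_{\rm eff})|\mc Z|/\delta)}{\gamma(d)}}.
\]
Since $\gamma(d)\ge 64\log((2+t_{\rm eff})/\delta)$ up to the matching $|\mc Z|$ factor, the right-hand side is at most roughly $0.3\sqrt{1+o(1)}$, and in particular below $\log 3$.

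For the logistic link, $|\ddot\mu|\le\dot\mu$. Hence $|\frac{d}{ds}\log\dot\mu(s)|\le1$, which gives the pointwise bound $e^{-|a-b|}\dot\mu(b)\le\dot\mu(a)\le e^{|a-b|}\dot\mu(b)$. Applied with $a=z^\top\hat\theta_\ell$ and $b=z^\top\theta^*$, this yields $\frac13\dot\mu(z^\top\theta^*)\le\dot\mu(z^\top\hat\theta_\ell)\le3\dot\mu(z^\top\theta^*)$ for every $z$. Summing with weights $\lambda_z zz^\top\succeq0$ gives the matrix sandwich simultaneously for all $\lambda\in\Delta(\mc Z)$, i.e.\ $\mathcal R_\ell$. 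The base case $\mathcal R_0$ is the same argument using the warm-up bound from Step 1.

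\textbf{Main obstacle.} The delicate point is the constant bookkeeping: I must check that $2.4\sqrt{\log(\cdot)/\gamma(d)}\le\log 3$ once the $|\mc Z|$ and $\ell^2$ union factors are included in the logarithm. I would handle this by absorbing those factors into $\gamma(d)$'s $\log((2+t_{\rm eff})/\delta)$ term, i.e.\ by running \cref{thm:conf_logi} at level $\delta/(2\ell^2|\mc Z|)$. I would also verify that the fixed-design hypothesis is legitimate given the adaptively chosen $\lambda_\ell$, which holds because each round's MLE uses only that round's fresh samples.
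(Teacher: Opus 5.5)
The paper gives no proof of its own here; it quotes Lemma~7 of \citet{jun2021improved}. Your induction is the natural reconstruction of that argument, and two of its pieces are sound. Step~1 is correct as far as it goes. The self-concordance step ($|\ddot\mu|\le\dot\mu$, hence $e^{-|a-b|}\dot\mu(b)\le\dot\mu(a)\le e^{|a-b|}\dot\mu(b)$) correctly gives the sandwich for all $\lambda\in\Delta(\mc{Z})$ at once, provided $|z^\top(\hat\theta_\ell-\theta^*)|\le\log 3$ for every $z\in\mc{Z}$.

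The gap is in how you obtain that uniform bound. You apply \cref{thm:conf_logi} at level $\delta/|\mc{Z}|$ (or $\delta'=\delta/(2\ell^2|\mc{Z}|)$). At that level its hypothesis is $\xi^2\le 1/\gamma_{\delta'}(d)$, where $\gamma_{\delta'}(d)=\gamma(d)+64\log(2\ell^2|\mc{Z}|)$. The conclusion $\xi^2\le 1/\gamma(d)$ of your Step~1 does not imply this. Moreover, with $t_{\mathrm{eff}}$ the number of distinct design points (as in the theorem), $\gamma(d)$ has no $\log|\mc{Z}|$ term. So your estimate that $2.4\sqrt{\log(2(2+t_{\mathrm{eff}})|\mc{Z}|/\delta)/\gamma(d)}$ is about $0.3$ is unjustified, and it fails once $\log|\mc{Z}|$ is a large multiple of $d+\log((2+t_{\mathrm{eff}})/\delta)$. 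You flag this as the main obstacle. However, your patch of evaluating $\gamma(d)$ at level $\delta'$ changes $f_1$ and $N_0$, so it would prove the lemma for a different algorithm.

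The algorithm as written already contains the needed slack; you discarded it in Step~1. Your own chain actually gives $\max_{z\in\mc{Z}}\|z\|^2_{H'(Z_\ell,\theta^*)^{-1}}\le 1/(\gamma(d)L_\ell)$ with $L_\ell=\log(2\ell^2|\mc{Z}|(2+|\mc{Z}|)/\delta)$. For the warm-up it gives $1/(3\gamma(d)L_1)$, since the warm-up's logarithm equals $L_1$.

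Now $\gamma(d)\ge 64$, and $L_\ell-1\ge\log(2\ell^2|\mc{Z}|)$ because $\delta\le e^{-1}$. Hence $\gamma(d)L_\ell\ge\gamma(d)+64\log(2\ell^2|\mc{Z}|)=\gamma_{\delta'}(d)$, so \cref{thm:conf_logi} does apply at level $\delta'$ for each $z$. Using $t_{\mathrm{eff}}\le|\mc{Z}|$ and $L_\ell\ge\log(6e)$, the resulting width is at most $2.4\sqrt{(L_\ell+\log 2)/(\gamma(d)L_\ell)}\le 2.4\sqrt{1.25/64}<0.34<\log 3$. The union over $z\in\mc{Z}$ then costs only $\delta/(2\ell^2)$. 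This is exactly what the factor $|\mc{Z}|(2+|\mc{Z}|)$ inside the logarithm of $N_\ell$ and $N_0$ is there for.

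Finally, \cref{thm:conf_logi} as reproduced bounds only the probability that the width fails \emph{and} $\mathcal{E}_{\mathrm{var}}$ holds. Your $2\delta$ count therefore also needs $\mathbb{P}(\mathcal{E}_{\mathrm{var}}^c)\le\delta$ under the same design condition. That fact must be imported from \citet{jun2021improved}; it cannot be read off the stated theorem.
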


\begin{lemma}
  \label{lem:rounding}
  (Lemma 13 of \citet{jun2021improved})
  Assume that $\lambda \in \Delta_{\mathcal{X}}$, and that we have sampled $x_1, \dots, x_n \sim \text{ROUND}(\lambda, n, \epsilon)$ with $n \geq r(\epsilon) = \frac{(d(d+1)+2)}{\epsilon}$, and $\epsilon \leq 1$. Then, for any $\theta$,
\[
\sum_{s=1}^n \dot{\mu}(x_s^\top \theta) x_s x_s^\top \succeq \frac{n}{1+\epsilon} \sum_{x \in \mathcal{X}} \lambda_x \dot{\mu}(x^\top \theta) xx^\top.
\]
This in particular implies
\begin{itemize}
  \item \textit{For any $z$,}
  \[
  \|z\|^2_{\left(\sum_{s=1}^n \dot{\mu}(x_s^\top \theta) x_s x_s^\top\right)^{-1}} 
  \leq \frac{(1+\epsilon)}{n} \|z\|^2_{\left(\sum_{x \in \mathcal{X}} \lambda_x \dot{\mu}(x^\top \theta) xx^\top\right)^{-1}}
  \]  
  \item \[
  \lambda_{\min}\left(\sum_{s=1}^n \dot{\mu}(x_s^\top \theta) x_s x_s^\top\right) 
  \geq \frac{n}{1+\epsilon} \lambda_{\min}\left(\sum_{x \in \mathcal{X}} \lambda_x \dot{\mu}(x^\top \theta) xx^\top\right)
  \]
\end{itemize}
\end{lemma}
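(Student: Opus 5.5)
The plan is to prove the matrix inequality \emph{term by term}, through a per-arm count guarantee: for every $x$ in the support of $\lambda$, the number of times $x$ appears among $x_1,\dots,x_n$, written $n_x$, should satisfy $n_x \ge n\lambda_x/(1+\epsilon)$. Once this holds, the statement is immediate for \emph{every} $\theta$ simultaneously. Indeed,
\[
\sum_{s=1}^n \dot{\mu}(x_s^\top\theta)x_sx_s^\top=\sum_{x} n_x\,\dot{\mu}(x^\top\theta)xx^\top ,
\]
and each summand $\dot{\mu}(x^\top\theta)xx^\top$ is positive semidefinite because $\dot{\mu}\ge 0$ and $xx^\top\succeq 0$. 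So replacing $n_x$ by the smaller quantity $n\lambda_x/(1+\epsilon)$ can only decrease the sum in Loewner order. This route deliberately avoids any argument that depends on the weights $\dot\mu(x^\top\theta)$. That matters because the lemma is applied with $\theta$ equal to the unknown $\theta^*$ while the rounding is computed with no knowledge of $\theta$, so a rounding scheme that only preserves the unweighted moment matrix $\sum_x\lambda_x xx^\top$ would not suffice.

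To get the count guarantee I would take ROUND to be the usual support-reduction-then-ceiling scheme.
\begin{itemize}
\item First, note that $\lambda$ can be assumed to have support of size at most $p:=(d(d+1)+2)/2$. Carath\'eodory's theorem applied to the vectors $\mathrm{vec}(xx^\top)$, which live in a space of dimension $d(d+1)/2$, gives an optimal design with this support size. Equivalently, ROUND is defined to act on such a sparse design.
\item Second, set $n_x=\lceil (n-p)\lambda_x\rceil$ on the support. These counts sum to at most $(n-p)+p=n$. Any leftover pulls are assigned arbitrarily, which only adds positive semidefinite terms.
\item Third, check the count guarantee. We have $n_x\ge (n-p)\lambda_x$, and $(n-p)\ge n/(1+\epsilon)$ is equivalent to $n\ge p(1+\epsilon)/\epsilon$. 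Since $\epsilon\le 1$, this is implied by $n\ge 2p/\epsilon=(d(d+1)+2)/\epsilon=r(\epsilon)$, which is exactly the hypothesis.
\end{itemize}

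The two bullet consequences then follow from standard Loewner-order facts. For the norm bound, $A\succeq cB\succ 0$ implies $A^{-1}\preceq c^{-1}B^{-1}$. Applying this with $c=n/(1+\epsilon)$ and evaluating the quadratic form at $z$ gives the stated bound on $\|z\|^2$. For the eigenvalue bound, $\lambda_{\min}$ is monotone under $\succeq$ and scales linearly with positive multiples, giving the second bullet.

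The main obstacle I expect is the support-reduction step. The count guarantee genuinely needs a support of size $O(d^2)$; if $\lambda$ had support on all of $\mathcal X$, the ceiling argument would require $n\gtrsim|\mathcal X|/\epsilon$. I would handle this by making explicit that ROUND (as in \citet{jun2021improved}) first passes to a Carath\'eodory-sparse design. Since that sparse design is itself a valid element of $\Delta_{\mathcal X}$, I would state and apply the lemma with $\lambda$ denoting the design actually rounded.
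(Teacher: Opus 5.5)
The paper gives no proof of its own (it imports Lemma 13 of \citet{jun2021improved}). Your argument is correct and is essentially the $\theta$-agnostic rounding argument behind that result: Carath\'eodory sparsification to at most $d(d+1)/2+1$ support points, ceiling counts guaranteeing $n_x\ge n\lambda_x/(1+\epsilon)$ on every support point, then termwise positive-semidefinite domination valid for all $\theta$ at once, which reproduces exactly $r(\epsilon)=(d(d+1)+2)/\epsilon$ and the role of $\epsilon\le 1$. Your caveat that $\lambda$ must be the sparse design actually rounded is genuinely needed, since for a $\lambda$ spread over more than $n$ pairwise non-parallel points a large generic $\theta$ orthogonal to an unsampled point breaks the inequality; relatedly, in the main loop of \cref{alg:exp_design_logi} the Carath\'eodory step should be applied to $\dot{\mu}(x^\top\hat{\theta}_{\ell-1})\,xx^\top$ rather than $xx^\top$, so that the design's objective value $\rho(\mathcal{Z}_\ell)$ is preserved.
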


\begin{lemma}
\label{thm:correctness_stopping_condi}
Let $D_\delta(z)$ be any valid confidence width for the estimated model $\hat{\theta}$ on arm $z$ with a probability of at least $1-\delta$, i.e., with a probability of at least $1-\delta$, we have
\begin{align*}
  z^\top\hat{\theta} - D_\delta(z) < z^\top\theta < z^\top\hat{\theta} + D_\delta(z).
\end{align*}
Then the following condition 
\begin{align*}
  D_\delta(z) < \bigl|z^\top\hat{\theta}\bigr|
\end{align*}
is a sufficient condition to guarantee that the reward model $\hat{\theta}$ makes a correct classification on arm $z$ with a probability of at least $1-\delta$ i.e.,
\begin{align*}
  \sign(z^\top\hat{\theta})=\sign(z^\top\theta).
\end{align*}
\end{lemma}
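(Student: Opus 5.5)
The statement to prove is the last one displayed, \cref{thm:correctness_stopping_condi}. The plan is a direct deterministic argument on the event where the confidence interval holds, followed by a probability transfer. Let $\mathcal{G}$ be the event
\[
z^\top\hat{\theta}-D_\delta(z)<z^\top\theta<z^\top\hat{\theta}+D_\delta(z).
\]
By hypothesis, $\mathbb{P}(\mathcal{G})\ge 1-\delta$. It therefore suffices to show that on $\mathcal{G}$, the condition $D_\delta(z)<\abr{z^\top\hat{\theta}}$ forces $\sign(z^\top\hat{\theta})=\sign(z^\top\theta)$.

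The argument splits into two cases according to the sign of $z^\top\hat{\theta}$. The value $z^\top\hat{\theta}=0$ cannot occur, because then $D_\delta(z)<0$, which is impossible for a nonnegative width; so the case split is exhaustive. In each case we use one side of the interval together with the condition, exactly as in \cref{ieq:exp_design_logi_eps_1}--\cref{ieq:exp_design_logi_eps_2}.
\begin{itemize}
\item If $z^\top\hat{\theta}>0$, the condition reads $D_\delta(z)<z^\top\hat{\theta}$. On $\mathcal{G}$ this gives $z^\top\theta>z^\top\hat{\theta}-D_\delta(z)>0$.
\item If $z^\top\hat{\theta}<0$, the condition reads $D_\delta(z)<-z^\top\hat{\theta}$. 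On $\mathcal{G}$ this gives $z^\top\theta<z^\top\hat{\theta}+D_\delta(z)<0$.
\end{itemize}
Equivalently, by \cref{lem:greedy_obj_equiv}, the condition says the remaining uncertainty $\min\cbr{-\mathsf{LCB}(z),\mathsf{UCB}(z)}$ is negative, meaning the interval excludes zero. In both cases the signs agree on $\mathcal{G}$, so the conclusion holds with probability at least $\mathbb{P}(\mathcal{G})\ge1-\delta$.

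The only point needing care is interpretive rather than technical. When $D_\delta(z)$ and $\hat{\theta}$ are data-dependent, the statement must be read as: on the event $\mathcal{G}$, whenever the condition holds, the classification is correct. It should not be read as a probability conditional on the condition holding. The argument above proves exactly this event inclusion, which is the form used in \cref{lem:exp_design_correctness_logi} and in the proof of \cref{thm:correctness}. I would state the inclusion explicitly to avoid any conditioning issue.
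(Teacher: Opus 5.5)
Your proposal is correct and follows essentially the same route as the paper: a two-case split on the sign of $z^\top\hat{\theta}$, applied on the event where the interval holds, using the lower endpoint when $z^\top\hat{\theta}>0$ and the upper endpoint when $z^\top\hat{\theta}<0$. Two of your additions go slightly beyond the paper, which leaves both implicit: you rule out $z^\top\hat{\theta}=0$ via nonnegativity of the width, and you state the conclusion as an event inclusion rather than a conditional probability.
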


\begin{proof}
  Suppose that $D_\delta(z) < \bigl|z^\top\hat{\theta}\bigr|$. We now show that $z^\top\hat{\theta}$ and $z^\top\theta$ share the same sign.
  
  \medskip
  
  \noindent\textbf{Case 1:} $z^\top\hat{\theta} > 0$.\\
  Since $D_\delta(z) < z^\top\hat{\theta}$, we have
  \begin{align*}
    z^\top\hat{\theta} - D_\delta(z) > 0.
  \end{align*}
  Hence, with a probability of at least $1-\delta$,
  \begin{align*}
    z^\top\theta > z^\top\hat{\theta} - D_\delta(z) > 0,
  \end{align*}
  so $z^\top\theta > 0$ and the classification is correct.

  \medskip
  
  \noindent\textbf{Case 2:} $z^\top\hat{\theta} < 0$.\\
  Here $-z^\top\hat{\theta} > 0$ and $D_\delta(z) < -z^\top\hat{\theta}$, so
  \begin{align*}
    z^\top\hat{\theta} + D_\delta(z) < 0.
  \end{align*}
  Hence, with a probability of at least $1-\delta$,
  \begin{align*}
    z^\top\theta < z^\top\hat{\theta} + D_\delta(z) < 0,
  \end{align*}
  so $z^\top\theta<0$ and again the classification is correct. In both cases the sign of $z^\top\hat{\theta}$ matches that of $z^\top\theta$.
\end{proof}

\subsection{Proof of Theorem~\ref{thm:lower_bound_logi}}
\label{proof:lower_bd}
\newtheorem*{app-thm:lower_bd}{Theorem~\ref{thm:lower_bound_logi}}

\begin{app-thm:lower_bd}
  Define a cone
  \begin{align*}
  \mathcal C_{\theta^*,\cZ}
  &:=\cbr{\theta\in\mathbb{R}^d:\exists\,z\in \cZ,\;\sign(z^\top\theta^*)
  \neq\sign(\theta^\top z)}.
  \end{align*}
  Then for any \(\delta\)–PAC algorithm to the preference learning problem, the expected label complexity \(\mathbb E_{\theta^*}[\tau]\) must satisfy
  \begin{align*}
  \mathbb E_{\theta^*}[\tau]
  \ge\;
  \log\frac1{2.4\,\delta}\min_{\lambda\in\Delta_n}\;\max_{\theta\in C_{\theta^*,Z}}
    \fr{1}{\sum_{i=1}^n \lambda_i\,
    D_{\mathrm{KL}}\rbr{P_{\theta^*,i}\,\|\,P_{\theta,i}}},
  \end{align*}
  where $P_{\theta,i}$ is the distribution of the preference feedback when pulling arm $z_i$ under parameter $\theta$.
\end{app-thm:lower_bd}

\begin{proof}
Let \(\cZ=\{z_1,\dots,z_n\}\subset\mathbb R^d\), and \(\theta^*\) be the true parameter. Let $T_i$ denote the random variable, which is the number of times arm $i$ is pulled. Under any \(\delta\)–PAC algorithm, by transportation lemma of \citet{kaufmann2016complexity}, for any $\theta\in\mathcal C_{\theta^*,\cZ}$, we have
\begin{align*}
\sum_{i=1}^n \mathbb E[T_i]\,
D_{\mathrm{KL}}\rbr{P_{\theta^*,i}\,\|\,P_{\theta,i}}
\;\ge\;\log\frac1{2.4\,\delta},
\end{align*}
where \(P_{\theta,i}\) is the distribution of the preference feedback when pulling arm \(z_i\) under parameter \(\theta\). In our case, we have
\begin{align*}
P_{\theta,i}=\mathrm{Bern}\rbr{p_i(\theta)},
\quad
p_i(\theta)=\sigma(z_i^\top\theta),
\quad
\sigma(z)=\frac1{1+e^{-z}}.
\end{align*}
For any feasible solution \(\mathbf t=(t_1,\dots,t_n)\) to the following optimization problem, 
\begin{align*}
    &\min_{t}\;\sum_{i=1}^n t_i
    \\
    \text{s.t.}\quad
    &\min_{\theta\in \mathcal C_{\theta^*,Z}}\sum_{i=1}^n t_i\,
    D_{\mathrm{KL}}\rbr{P_{\theta^*,i}\,\|\,P_{\theta,i}}
    \;\ge\;
    \log\!\frac{1}{2.4\,\delta}.
\end{align*}
we have
\begin{align*}
  \sum_{i=1}^n \mathbb{E}[T_i]
  \;\ge\;
  \sum_{i=1}^n t_i.
\end{align*}
In particular, taking \(\mathbf t^*\) to be an optimal solution to this problem, 
\begin{align*}
  \min_{\theta\in C_{\theta^*,Z}}
  \sum_{i=1}^n 
    \frac{t_i^*}{\sum_{j=1}^n t_j^*}
    \;D_{\mathrm{KL}}\rbr{P_{\theta^*,i}\,\|\,P_{\theta,i}}
  \;\ge\;
  \frac{\log\rbr{1/2.4\delta}}%
       {\sum_{j=1}^n t_j^*}
  \;\ge\;
  \frac{\log\rbr{1/2.4\delta}}%
       {\sum_{j=1}^n \mathbb{E}[T_j]}.
\end{align*}
Since 
$\sum_{i=1}^n \frac{t_i^*}{\sum_{j=1}^n t_j^*}=1$,
we conclude
\begin{align*}
  \max_{\lambda\in\Delta_n}\;\min_{\theta\in C_{\theta^*,Z}}
    \sum_{i=1}^n \lambda_i\,
    D_{\mathrm{KL}}\rbr{P_{\theta^*,i}\,\|\,P_{\theta,i}}
  \;\ge\;
  \frac{\log\rbr{1/2.4\delta}}%
       {\sum_{i=1}^n \mathbb{E}[T_i]}. 
\end{align*}
Rearranging the terms, we have
\begin{align*}
  \mathbb E_{\theta^*}[\tau]=\sum_{i=1}^n \mathbb E[T_i]
  \ge\;
  \log\frac1{2.4\,\delta}\min_{\lambda\in\Delta_n}\;\max_{\theta\in C_{\theta^*,Z}}
    \fr{1}{\sum_{i=1}^n \lambda_i\,
    D_{\mathrm{KL}}\rbr{P_{\theta^*,i}\,\|\,P_{\theta,i}}}.
\end{align*}
\end{proof}

\subsection{Proof of Lemma~\ref{lem:greedy_obj_equiv}}
\label{proof:greedy_equiv}
\newtheorem*{app-thm:greedy_equiv}{Lemma~\ref{lem:greedy_obj_equiv}}

\begin{app-thm:greedy_equiv}
  The greedy objective in \cref{alg:greedy_obj} can be equivalently written as
\begin{align*}
  \max_z\min\cbr{-\mathsf{LCB}_t(z), \mathsf{UCB}_t(z)} = \max_z\cbr{D_{\delta,t}(z) - \abr{\hat{\theta}_t^\top z}}.
\end{align*}
\end{app-thm:greedy_equiv}

\begin{proof}

\begin{align*}
  &\max_z\min\cbr{-\mathsf{LCB}_t(z), \mathsf{UCB}_t(z)}\\
  = &\max_z\min\cbr{-(\hat{\theta}_t^\top z - D_{\delta,t}(z)), \hat{\theta}_t^\top z + D_{\delta,t}(z)}\\
  = &\max_z\min\cbr{-\hat{\theta}_t^\top z + D_{\delta,t}(z), \hat{\theta}_t^\top z + D_{\delta,t}(z)}\\
  = &\max_z\cbr{\min\cbr{-\hat{\theta}_t^\top z, \hat{\theta}_t^\top z} + D_{\delta,t}(z)}\\
  = &\max_z\cbr{D_{\delta,t}(z) - \abr{\hat{\theta}_t^\top z}}.
\end{align*}
\end{proof}

\subsection{Proof of Theorem~\ref{thm:greedy_obj_better}}
\label{proof:greedy}
\newtheorem*{app-thm:greedy}{Theorem~\ref{thm:greedy_obj_better}}

\begin{app-thm:greedy}
  For the canonical instance $z_i = e_i$ for $i=1, \ldots, d$ and $\theta^*=(1, \ldots, 1, \eps)^\top$ for $\eps\in(0, 1/4)$, the greedy selection objective $\arg\max_{i \in [d]} \|e_i\|_{H_t'^{-1}}$ has label complexity scaling as $\tilde{O}\rbr{\fr{d}{\eps^2}}$, while with \cref{alg:greedy_obj}, the label complexity scales as $\tilde{O}\rbr{d+\fr{1}{\eps^2}}$.
\end{app-thm:greedy}

\begin{proof}
We begin by establishing a sufficient condition for the algorithms to stop. Our analysis considers the canonical instance where the arm set $\cZ$ consists of the standard basis vectors $\{e_1, \dots, e_d\}$. In this setting, the two algorithms greedily pull arms based on their respective selection rules:
\begin{itemize}
    \item \textbf{G-optimal based method \cite{das2025active}:} At each round $t$, select arm $\arg\max_{i \in [d]} \|e_i\|_{H_t'^{-1}}$, where $H_t' = \sum_{s=1}^{t} \dot{\mu}(z_s^\top \hat{\theta}_t) z_s z_s^\top$.
    \item \textbf{Our method:} At each round $t$, select arm $\arg\max_{i \in [d]} \{D_{\delta,t}(e_i) - |e_i^\top \hat{\theta}_t|\}$.
\end{itemize}

From \cref{thm:correctness_stopping_condi}, the suggested stopping rule is
\begin{align}
    D_{\delta,t}(e_i) - |e_i^\top \hat{\theta}_t| < 0, \quad \text{for all } i \in [d], \label{ieq:stopping_condi_emp}
\end{align}
where $D_{\delta,t}(e_i)$ is the confidence width for arm $e_i$ at round $t$. Since this rule involves the empirical estimate $\hat{\theta}_t$, we first derive a simpler sufficient condition that depends only on the true parameter $\theta^*$.

We show that the condition
\begin{align}
    D_{\delta,t}(e_i) < \frac{1}{2} |e_i^\top \theta^*|, \quad \text{for all } i \in [d], \label{ieq:stopping_condi_non_emp}
\end{align}
is sufficient for the stopping rule in \cref{ieq:stopping_condi_emp} to be satisfied. By definition of the confidence width, we know that with probability at least $1-\delta$, $|e_i^\top \hat{\theta}_t - e_i^\top \theta^*| \leq D_{\delta,t}(e_i)$. Using the triangle inequality, we have:
\begin{align*}
    |e_i^\top \hat{\theta}_t| &\ge |e_i^\top \theta^*| - |e_i^\top \hat{\theta}_t - e_i^\top \theta^*| \\
    &\ge |e_i^\top \theta^*| - D_{\delta,t}(e_i).
\end{align*}
If we assume \cref{ieq:stopping_condi_non_emp} holds, then $|e_i^\top \theta^*| > 2 D_{\delta,t}(e_i)$. Substituting this into the inequality above gives:
\begin{align*}
    |e_i^\top \hat{\theta}_t| > 2 D_{\delta,t}(e_i) - D_{\delta,t}(e_i) = D_{\delta,t}(e_i).
\end{align*}
This directly implies that $D_{\delta,t}(e_i) - |e_i^\top \hat{\theta}_t| < 0$, satisfying the stopping rule.

Now, we use this sufficient condition to determine the number of pulls $t_i$ required for each arm $e_i$. For the canonical instance, \cref{lem:conf_width_canonical} allows us to express the condition in \cref{ieq:stopping_condi_non_emp} as:
\begin{align}
    2.4 \sqrt{\frac{1}{\dot{\mu}(\theta^*_i)t_i}} \sqrt{ \log \frac{2(2 + |\cZ|)}{\delta} } < \frac{1}{2}|\theta^*_i|, \quad \text{for all } i \in [d]. \label{ieq:stopping_each_arm}
\end{align}
Solving for $t_i$, we find that each arm $i$ must be pulled a number of times satisfying:
\begin{align*}
    t_i > \frac{\cC}{\dot{\mu}(\theta^*_i)(\theta^*_i)^2} \log \frac{2(2 + |\cZ|)}{\delta},
\end{align*}
where $\cC$ is a numerical constant.

We can now analyze the total sample complexity for each algorithm.
\paragraph{G-optimal Method}
As established in \cref{lem:g_to_uniform}, the G-optimal method behaves as a round-robin for the canonical instance, meaning $t_i \approx T/d$ for all $i$ after $T$ rounds. To satisfy the stopping condition for all arms, the number of pulls for the arm with the smallest $|\theta_d^*|=\varepsilon$ must meet the requirement. This leads to a total sample complexity $T$ of:
\begin{align*}
    T \approx d \cdot t_d > d \cdot \frac{\cC}{\varepsilon^2} \log\left(\frac{|\cZ|}{\delta}\right) \implies T = \tilde{O}\left(\frac{d}{\varepsilon^2}\right).
\end{align*}
The algorithm continues to pull all arms in a round-robin fashion until the condition is met for every arm, including those that may have satisfied it much earlier.

\paragraph{Our Method}
A fundamental difference in our algorithm is that its selection rule, $\arg\max_i \{D_{\delta,t}(e_i) - |e_i^\top \hat{\theta}_t|\}$, is directly aligned with the stopping rule. Once an arm $e_i$ satisfies the stopping condition $D_{\delta,t}(e_i) - |e_i^\top \hat{\theta}_t| < 0$, its selection metric becomes negative and it will not be pulled again, as the algorithm prioritizes arms with larger, positive values. The algorithm thus focuses its budget on arms that have not yet met the condition.

The total sample complexity is the sum of the required pulls for each arm. Let $|\theta_d^*| = \varepsilon$. The complexity is dominated by the pulls required for this arm, while other arms require fewer pulls.
\begin{align*}
    T = \sum_{i=1}^{d} t_i = \sum_{i=1}^{d} \tilde{O}\left(\frac{1}{(\theta_i^*)^2}\right) = \tilde{O}\left(\sum_{i=1}^{d-1} \frac{1}{(\theta_i^*)^2} + \frac{1}{\varepsilon^2}\right)=\tilde{O}\left(d + \frac{1}{\varepsilon^2}\right).
\end{align*}
This demonstrates a significant improvement over the G-optimal method when $\varepsilon$ is small.
\end{proof}

\begin{lemma}
  \label{lem:conf_width_canonical}
  For the canonical instance where $z_i=e_i$ for $i=1, \cdots, d$, the confidence interval in \cref{thm:conf_logi} can reduce to
  \begin{align*}
  \left| \hat{\theta}_{t,i} - \theta^*_i \right| < 2.4 \sqrt{\fr{1}{\dot{\mu}\rbr{\theta^*_i}t_i}} \sqrt{ \log \frac{2(2 + \abs{\cZ})}{\delta} }.
  \end{align*}

\end{lemma}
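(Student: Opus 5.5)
Because the arms are the standard basis vectors $e_1,\dots,e_d$, the Fisher information matrix in \cref{thm:conf_logi} is diagonal, so the general confidence width reduces to a one-dimensional statement for each coordinate. Let $t_i=\sum_{s\le t}\ii{z_s=e_i}$ be the number of pulls of arm $e_i$. Then
\begin{align*}
H'(\cbr{z_s,s\in[t]},\theta^*)=\sum_{i=1}^d t_i\,\dot{\mu}(\theta^*_i)\,e_ie_i^\top ,
\end{align*}
which is invertible once every arm has been pulled at least once. For this matrix,
\begin{align*}
\vcn{e_i}^2_{H'^{-1}}=\frac{1}{\dot{\mu}(\theta^*_i)t_i}.
\end{align*}
Also $e_i^\top(\hat{\theta}_t-\theta^*)=\hat{\theta}_{t,i}-\theta^*_i$. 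Substituting $z=e_i$ into $D_{\delta,t}(z)$ gives exactly the claimed width, except that $t_{\text{eff}}$ appears in the logarithm. Since the number of distinct vectors among the $z_s$ is at most $\abr{\cZ}=d$, we have $t_{\text{eff}}\le\abr{\cZ}$, and replacing $t_{\text{eff}}$ by $\abr{\cZ}$ only enlarges the width.

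It remains to verify the hypotheses of \cref{thm:conf_logi}.

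\textbf{Burn-in condition.} The theorem requires $\xi_t^2\le 1/\gamma(d)$. In the diagonal case this becomes
\begin{align*}
\max_i \frac{1}{\dot{\mu}(\theta^*_i)t_i}\le \frac{1}{\gamma(d)},
\quad\text{i.e.}\quad
t_i\ge \gamma(d)/\dot{\mu}(\theta^*_i)\ \text{for every } i .
\end{align*}
I would state this as a minimal-pull requirement that holds after a warm-up, for example a round-robin of $\gamma(d)/\kappa_0$ pulls per arm. This contributes only an additive $\tilde O(d)$ term, which does not affect the rates in \cref{thm:greedy_obj_better}.

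\textbf{Event $\mathcal{E}_{\text{var}}$.} The bound of \cref{thm:conf_logi} holds jointly with $\mathcal{E}_{\text{var}}$. Under the burn-in condition, $\mathcal{E}_{\text{var}}$ holds with high probability by the same argument as \cref{lem:theta_ell_theta_star}. This costs at most a constant factor in $\delta$, which can be absorbed into the logarithm or the constant $2.4$.

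\textbf{Fixed-design requirement.} This is the main obstacle. \cref{thm:conf_logi} needs the $z_s$ to be a fixed design, but both greedy rules choose arms adaptively. The diagonal structure is what saves us. The MLE decouples across coordinates: $\hat{\theta}_{t,i}$ is the one-dimensional logistic MLE computed only from the labels of arm $e_i$. Conditioned on the count $t_i$, those labels are i.i.d. $\mathrm{Bern}(\psi(\theta^*_i))$. The adaptivity affects only how many samples each coordinate receives, not the samples themselves.

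So I would apply the one-dimensional version of \cref{thm:conf_logi} to the first $m$ labels of arm $i$, for each fixed $m$. These prefixes form a fixed design, since they are the first $m$ draws of an i.i.d. stream. A union bound over $m$ then covers the random stopping count $t_i$; alternatively, one can appeal to a time-uniform version of the bound, at the cost of an extra logarithmic factor. That factor is absorbed into the $\tilde O$ notation where the lemma is used. I would flag that the stated form with only $\log\frac{2(2+\abr{\cZ})}{\delta}$ is literally valid only for a fixed count per arm, and otherwise holds up to this logarithmic correction.
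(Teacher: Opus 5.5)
Your first paragraph is the paper's proof. The paper simply substitutes $z=e_i$ into Theorem~\ref{thm:conf_logi}, using that $H'$ is diagonal with entries $t_i\dot{\mu}(\theta^*_i)$ and, implicitly, that $t_{\text{eff}}\le|\mathcal{Z}|$. It never checks the burn-in condition, the event $\mathcal{E}_{\text{var}}$, or the fixed-design hypothesis, so your extra verification goes beyond it. Your caveat is also well founded. Both selection rules in Theorem~\ref{thm:greedy_obj_better} choose arms adaptively, so Theorem~\ref{thm:conf_logi} as stated does not justify the width with only $\log\frac{2(2+|\mathcal{Z}|)}{\delta}$. Decoupling the MLE across coordinates and taking a union bound per arm and per prefix length is a sound repair, at the cost of an extra logarithm.

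Two points in your supplementary argument need fixing.

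First, the sentence ``conditioned on the count $t_i$, those labels are i.i.d.\ $\mathrm{Bern}(\psi(\theta^*_i))$'' is false when $t_i$ is data-dependent. For example, Algorithm~\ref{alg:greedy_obj} deprioritizes $e_i$ as soon as $|\hat{\theta}_{t,i}|$ exceeds $D_{\delta,t}(e_i)$. The event $\{t_i=m\}$ therefore depends on the first $m$ labels of arm $i$, and conditioning on it biases them. Your argument does not need this sentence. The correct justification is the reward-table model: the labels of arm $i$ are the first $t_i$ entries of a pre-drawn i.i.d.\ sequence. Each fixed-length prefix is then a fixed design, and the union bound over $m$ handles the random $t_i$.

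Second, your burn-in accounting is off. Since $\gamma(d)\ge 64d\log 6$, a round robin of $\gamma(d)/\kappa_0$ pulls per arm costs $d\gamma(d)/\kappa_0=\Omega(d^2)$ labels, not $\tilde{O}(d)$. That would turn the claimed $\tilde{O}(d+1/\varepsilon^2)$ into $\tilde{O}(d^2+1/\varepsilon^2)$. Your fixed-design step already applies the one-dimensional version of the theorem per coordinate. Under that version the requirement is $t_i\ge\gamma(1)/\dot{\mu}(\theta^*_i)$, where $\gamma(1)$ is only logarithmic after the union bounds, so the warm-up costs $\tilde{O}(d/\kappa_0)$. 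State the burn-in with $\gamma(1)$ so the two parts of your argument agree.
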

\begin{proof}
The result is straightforward by noting that for the canonical instance and the confidence interval in \cref{thm:conf_logi}
\end{proof}

\begin{lemma}
\label{lem:g_to_uniform}
  For the canonical instance where $z_i=e_i$ for $i=1, \cdots, d$, and $\theta^*=(1, \ldots, 1, \eps)^\top$, the algorithm in \cite{das2025active} pulls arms almost uniformly, i.e.,
  \begin{align*}
  t_1=\cdots=t_{d-1} \quad \text{ and } \quad t_d < t_1 < 2t_d.
\end{align*}
\end{lemma}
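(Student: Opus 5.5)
The plan is to exploit the diagonal structure of the canonical instance. With $z_i=e_i$, the Fisher information after $t$ rounds is diagonal:
\[
H_t'=\mathrm{diag}\bigl(\dot{\mu}(\hat{\theta}_{t,1})t_1,\ldots,\dot{\mu}(\hat{\theta}_{t,d})t_d\bigr),
\]
where $t_i$ is the number of pulls of arm $i$. Hence the selection rule of \citet{das2025active} reduces to
\[
\arg\max_i \frac{1}{\dot{\mu}(\hat{\theta}_{t,i})\,t_i},
\]
that is, the rule picks the arm minimizing the effective count $w_i:=\dot{\mu}(\hat{\theta}_{t,i})t_i$. This behaves like a weighted round robin. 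Each pull of arm $i$ raises $w_i$ by about $\dot{\mu}(\theta^*_i)$, and the algorithm keeps all the $w_i$ approximately equalized.

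\textbf{Step 1: reduce to the population weights.} I would first argue, on the good event of \cref{thm:conf_logi}, or by \cref{lem:theta_ell_theta_star} after a short warm-up, that $\dot{\mu}(\hat{\theta}_{t,i})$ is within a constant factor of $\dot{\mu}(\theta_i^*)$. For the first $d-1$ coordinates, which share $\theta^*_i=1$, and assuming symmetric tie-breaking, I would use the idealized dynamics with $\dot{\mu}(\theta^*)$ plugged in.

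\textbf{Step 2: equal counts on the first $d-1$ arms.} Since arms $1,\ldots,d-1$ are exchangeable with identical weight $\dot{\mu}(1)$, the min-effective-count rule pulls them cyclically. Hence $t_1=\cdots=t_{d-1}$ at the end of each cycle, which is the sense in which the first equality should be read.

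\textbf{Step 3: compare arm $d$ with the rest.} Equalization gives $\dot{\mu}(\eps)\,t_d\approx\dot{\mu}(1)\,t_1$, so
\[
\frac{t_1}{t_d}\approx\frac{\dot{\mu}(\eps)}{\dot{\mu}(1)}.
\]
For the logistic link, $\dot{\mu}(x)=\sigma(x)(1-\sigma(x))$ is decreasing in $|x|$. Then $\dot{\mu}(\eps)>\dot{\mu}(1/4)\approx 0.246$ and $\dot{\mu}(\eps)\le 1/4$, while $\dot{\mu}(1)\approx0.197$. The ratio therefore lies in roughly $(1.25,1.27)$, strictly inside $(1,2)$. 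Because this ratio is bounded away from both $1$ and $2$, there is slack to absorb the $\pm1$ discretization from the greedy rule and the constant-factor estimation error. This yields $t_d<t_1<2t_d$ once the counts are moderately large.

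\textbf{Main obstacle.} The hard step is controlling the effect of using $\hat{\theta}_t$ rather than $\theta^*$ inside $\dot{\mu}$. A crude factor-$3$ comparison as in \cref{lem:theta_ell_theta_star} would destroy the $(1,2)$ window. I would therefore use the per-coordinate concentration of \cref{lem:conf_width_canonical}, which gives $|\hat{\theta}_{t,i}-\theta^*_i|=O(\sqrt{\log(1/\delta)/t_i})$. Combined with the Lipschitzness of $\dot{\mu}$, this makes $\dot{\mu}(\hat{\theta}_{t,i})/\dot{\mu}(\theta^*_i)\to1$, and the claim then holds for all $t$ beyond a burn-in of order $\log(1/\delta)$ pulls per arm.
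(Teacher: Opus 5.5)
Your proposal follows the paper's argument. With $z_i=e_i$ the Fisher information is diagonal, the selection rule becomes $\arg\min_i \dot{\mu}(\theta^*_i)t_i$, and $\dot{\mu}(1)<\dot{\mu}(\varepsilon)<2\dot{\mu}(1)$ yields $t_1=\cdots=t_{d-1}$ and $t_d<t_1<2t_d$. The paper evaluates $H'$ at $\theta^*$ and ignores rounding, so your discretization slack and $\hat{\theta}$-concentration step are extra rigor rather than a different route. Note that with $\hat{\theta}$ plugged in you can only expect $t_1\approx\cdots\approx t_{d-1}$, not exact equality.
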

\begin{proof}
For the algorithm of \cite{das2025active}, the selection rule is
\begin{align*}
  \arg\max_i \vcn{z_i}_{H'(\cbr{z_s,s\in[t]},\theta^*)^{-1}} = \arg\max_i \fr{1}{\dot{\mu}\rbr{\theta^*_i}t_i}= \arg\min_i \dot{\mu}\rbr{\theta^*_i}t_i.
\end{align*}
Note that we have 
\begin{align*}
  \dot{\mu}\rbr{\theta^*_i}=\dot{\mu}(1) \text{ for } i=1, \cdots, d-1, \text{ and } \dot{\mu}\rbr{\theta^*_d}=\dot{\mu}(\eps).
\end{align*}
Also note that for a small $\eps$, e.g., $\eps\in(0, 1/4)$,
\begin{align*}
  \dot{\mu}(1) < \dot{\mu}(\eps) < 2\dot{\mu}(1).
\end{align*}
This implies that the algorithm will pull the arms almost uniformly. Specifically, if $t_i$ is the number of times arm $i$ is pulled, then we have 
\begin{align*}
  t_1=\cdots=t_{d-1} \quad \text{ and } \quad t_d < t_1 < 2t_d.
\end{align*}
\end{proof}

\section{Experiments}
\label{app-experiment}

Following practical considerations for real-world applications, we implement all algorithms in a batched setting, as a fully sequential, round-by-round implementation is often infeasible due to labeling latency \citep{scheid2024optimal}. In each epoch, we sample a batch of 50 arms to query, collect their labels, and then update the model. All reported results are averaged over 50 independent runs.

Note that any given dataset represents a single realization of the underlying label noise, meaning the chosen and rejected labels for each pair are fixed. The randomness in our experimental results, therefore, comes from the random train-test split performed at the start of each independent run. 

As shown in \cref{fig:exp_results}, our proposed method consistently outperforms all baselines on the Anthropic helpful and harmless and the ultrafeedback-binarized-preferences-cleaned datasets. On the Nectar dataset, our algorithm exhibits a significant advantage in the early stages, while Uncertainty Sampling achieves comparable performance in the later stages. 

\begin{figure}[h]
    \centering
    \begin{subfigure}{0.32\textwidth}
        \centering
        \includegraphics[width=\textwidth]{plot/plot_anthropic2.pdf}
        \label{fig:exp_hh}
    \end{subfigure}
    \hfill 
    \begin{subfigure}{0.32\textwidth}
        \centering
        \includegraphics[width=\textwidth]{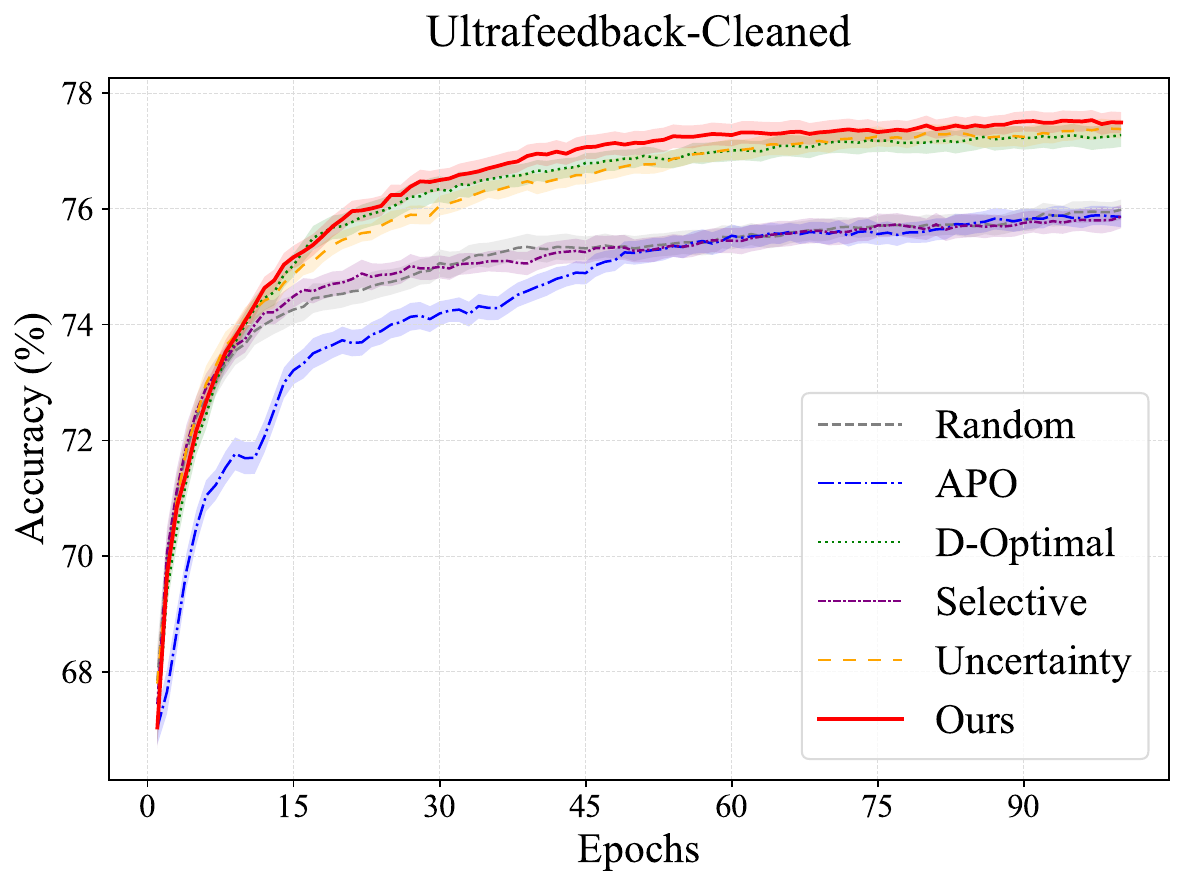}
        \label{fig:exp_ultrafeedback}
    \end{subfigure}
    \hfill 
    \begin{subfigure}{0.32\textwidth}
        \centering
        \includegraphics[width=\textwidth]{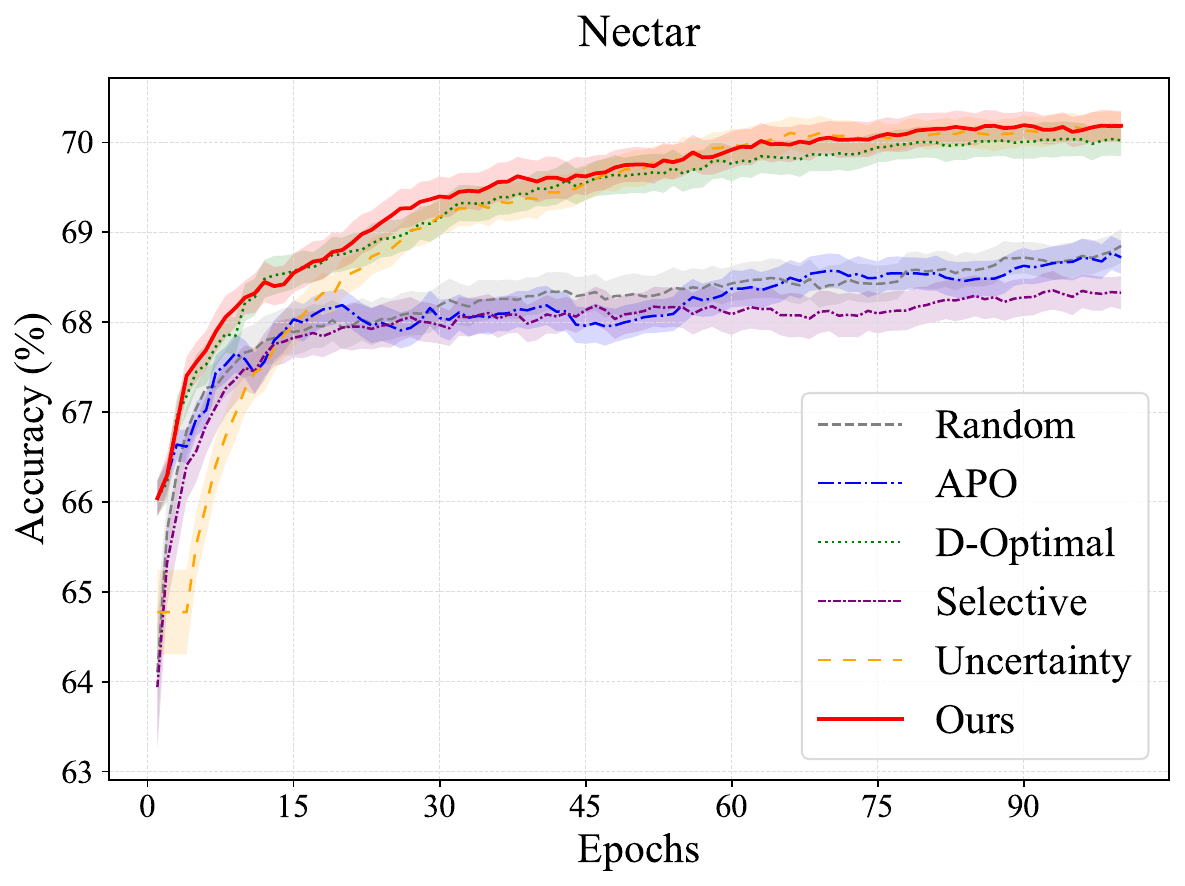}
        \label{fig:exp_nectar}
    \end{subfigure}
    
    \caption{Comparison of different active preference learning methods on three datasets.}
    \label{fig:exp_results}
\end{figure}

\end{document}